\def\set@curr@file#1{\def\@curr@file{#1}} 
\newenvironment{customthm}[1]
  {\innercustomthm}
  {\endinnercustomthm}
\newenvironment{customcor}[1]
  {\innercustomcor}
  {\endinnercustomcor}
\newtheorem{thm}{Theorem}
\newtheorem{lem}[thm]{Lemma}
\newtheorem{cor}[thm]{Corollary}
\theoremstyle{definition}
\newtheorem{defn}{Definition}
\newtheorem*{defn*}{Definition}
\renewenvironment{proof}[1][]{\par\noindent{\bf Proof #1\ }}{\hfill\BlackBox\\}
\newcommand{\inbrace}[1]{\left \{ #1 \right \}}
\newcommand{\inparen}[1]{\left ( #1 \right )}
\newcommand{\insquare}[1]{\left [ #1 \right ]}
\newcommand{\abs}[1]{\left\lvert #1 \right\rvert}
\newlength{\dhatheight}
\newcommand{\ceil}[1]{\left \lceil #1 \right \rceil}
\newcommand{\SET}[1]{\inbrace{#1}}
\DeclareMathOperator*{\Ex}{\mathbb{E}}
\DeclareMathOperator*{\Prob}{Pr}
\newcommand{\bbN}{{\mathbb N}}
\newcommand{\bbR}{{\mathbb R}}
\newcommand{\bbA}{{\mathbb A}}
\newcommand{\bbB}{{\mathbb B}}
\let\boldm\bm
\newcommand{\by}{{\boldm y}}
\newcommand{\tby}{\Tilde{{\boldm y}}}
\newcommand{\calC}{\mathcal{C}}
\newcommand{\calD}{\mathcal{D}}
\newcommand{\calU}{\mathcal{U}}
\newcommand{\calX}{\mathcal{X}}
\newcommand{\calY}{\mathcal{Y}}
\newcommand{\B}{\mathrm{B}}
\newcommand{\ind}{\mathbbm{1}}
\newcommand{\Risk}{{\rm R}}
\newcommand{\MAJ}{{\rm MAJ}}
\newcommand{\Rob}{{\rm Rob}}
\newcommand{\CAS}{{\rm CAS}}
\newcommand{\wrt}{with respect to }
\newcommand{\removed}[1]{}
\newcommand{\RoBoost}{\hyperref[alg:cascade]{\textup{$\beta$-RoBoost}}}
\newcommand{\URoBoost}{\hyperref[alg:unlabeled]{\textup{$\beta$-URoBoost}}}
\newcommand{\alphaBoost}{\hyperref[alg:alphaboost]{\textup{$\alpha$-Boost}}}
\begin{document}

\title{Boosting Barely Robust Learners:\\
A New Perspective on Adversarial Robustness}
\author{%
 \name{Avrim Blum} \email{avrim@ttic.edu}\\
 \name{Omar Montasser} \email{omar@ttic.edu}\\
 \name{Greg Shakhnarovich} \email{greg@ttic.edu}\\
 \addr Toyota Technological Institute at Chicago\\
 \name{Hongyang Zhang} \email{hongyang.zhang@uwaterloo.ca}\\
 \addr University of Waterloo
}

\maketitle

\begin{abstract}%
We present an oracle-efficient algorithm for boosting the \emph{adversarial robustness} of \emph{barely robust} learners. Barely robust learning algorithms learn predictors that are adversarially robust only on a small fraction $\beta \ll 1$ of the data distribution. Our proposed notion of barely robust learning requires robustness with respect to a ``larger'' perturbation set; which we show is \emph{necessary} for strongly robust learning, and that weaker relaxations are \emph{not} sufficient for strongly robust learning. Our results reveal a qualitative and quantitative {\em equivalence} between two seemingly unrelated problems: strongly robust learning and barely robust learning.
\end{abstract}

\begin{keywords}%
Adversarially Robust Learning, Boosting
\end{keywords}

\section{Introduction}
\label{sec:intro}
We consider the problem of learning predictors that are {\em robust} to adversarial examples at test time. That is, we would like to be robust against a perturbation set $\calU:\calX \to 2^{\calX}$, where $\calU(x)\subseteq \calX$ is the set of allowed perturbations that an adversary might replace $x$ with, e.g., $\calU$ could be perturbations of bounded $\ell_p$-norms \citep{DBLP:journals/corr/GoodfellowSS14}. The goal is to learn a predictor $h$ with small {\em robust risk}:
\begin{equation}
    \label{eqn:rob-risk}
\Risk_{\calU}(h;\calD) \triangleq \Prob_{(x,y)\sim \calD}\insquare{\exists z\in \calU(x): h(z)\neq y}.
\end{equation}

Adversarially robust learning has proven to be quite challenging in practice, where current adversarial learning methods typically learn predictors with low natural error but robust only on a small fraction of the data. For example, according to the \textup{RobustBench} leaderboard \citep{croce2020robustbench}, the highest achieved robust accuracy \wrt $\ell_\infty$ perturbations on \textup{CIFAR10} is $\approx 66\%$ and on \textup{ImageNet} is $\approx 38\%$. Can we leverage existing methods and go beyond their limits? This motivates us to pursue the idea of \emph{boosting} robustness, and study the following theoretical question:
\begin{center}
    \textit{Can we boost {\em barely} robust learning algorithms to learn predictors with high {\em robust} accuracy?}
\end{center}
That is, given a \emph{barely} robust learning algorithm $\bbA$ which can only learn predictors robust on say $\beta=10\%$ fraction of the data distribution, we are asking whether it is possible to \emph{boost} the robustness of $\bbA$ and learn predictors with high \emph{robust} accuracy, say $90\%$. We want to emphasize that we are interested here in extreme situations when the robustness parameter $\beta \ll 1$. We are interested in generic boosting algorithms that take as input a black-box learner $\bbA$ and a specification of the perturbation set $\calU$, and output a predictor with high robust accuracy by repeatedly calling $\bbA$.

In this work, by studying the question above, we offer a new perspective on adversarial robustness. Specifically, we discover a qualitative and quantitative {\em equivalence} between two seemingly unrelated problems: strongly robust learning and barely robust learning. We show that barely robust learning implies strongly robust learning through a novel algorithm for \emph{boosting} robustness. As we elaborate below, our proposed notion of barely robust learning requires robustness \wrt a ``larger'' perturbation set. We also show that this is \emph{necessary} for strongly robust learning, and that weaker relaxations of barely robust learning do not imply strongly robust learning. 

\subsection{Main Contributions}

When formally studying the problem of boosting robustness, an important question emerges which is: what notion of ``barely robust'' learning is required for boosting robustness? As we shall show, this is not immediately obvious. One of the main contributions of this work is the following key definition of \emph{barely robust} learners:
\begin{defn} [Barely Robust Learner]
\label{def:barelyrobust}
Learner $\bbA$ $(\beta, \epsilon, \delta)$-barely-robustly-learns a concept $c:\calX\to\calY$ w.r.t. $\calU^{-1}(\calU)$ if $\exists m_{\bbA}(\beta,\epsilon,\delta)\in \bbN$ such that for any distribution $D$ over $\calX$ s.t.~$\Prob_{x\sim D}\insquare{\exists z\in \calU(x): c(z)\neq c(x)}=0$, with probability at least $1-\delta$ over $S=\SET{(x_i,c(x_i))}_{i=1}^{m}\sim D_c$, $\bbA$ outputs a predictor $\hat{h}=\bbA(S)$ satisfying:
\[{\Prob_{x\sim D}\insquare{\forall \Tilde{x}\in \calU^{-1}(\calU)(x): \hat{h}(\Tilde{x})= \hat{h}(x)}\geq \beta}~~~\text{and}~~~{\Prob_{x\sim D}\insquare{\hat{h}(x)\neq c(x)}\leq \epsilon}.\]
\end{defn}
Notice that we require $\beta$-robustness \wrt a ``larger'' perturbation set $\calU^{-1}(\calU)$. Specifically, $\calU^{-1}(\calU)(x)$ is the set of all \emph{natural} examples $\Tilde{x}$ that share an adversarial perturbation $z$ with $x$ (see \prettyref{eqn:youinverseyou}). E.g., if $\calU(x)$ is an $\ell_p$-ball with radius $\gamma$, then $\calU^{-1}(\calU)(x)$ is an $\ell_p$-ball with radius $2\gamma$. 

On the other hand, $(\epsilon,\delta)$-robustly-learning a concept $c$ \wrt $\calU$ is concerned with learning a predictor $\hat{h}$ from samples $S$ with small robust risk $\Risk_\calU(\hat{h}; D_c)\leq \epsilon$ with probability at least $1-\delta$ over $S\sim D^m_c$ (see \prettyref{eqn:rob-risk} and \prettyref{def:strongly-robust}), where we are interested in robustness \wrt $\calU$ and {\em not} $\calU^{-1}(\calU)$. Despite this qualitative difference between \emph{barely robust} learning and \emph{strongly robust} learning, we provably show next that they are in fact \emph{equivalent}. 

Our main algorithmic result is \RoBoost, an oracle-efficient boosting algorithm that boosts barely robust learners to strongly robust learners:
\begin{customthm}{1}
For any perturbation set $\calU$, \RoBoost\ $(\epsilon,\delta)$-robustly-learns any target concept $c:\calX\to\calY$  w.r.t.~$\calU$ using $T= \frac{\ln(2/\epsilon)}{\beta}$ black-box oracle calls to any $(\beta,\frac{\beta\epsilon}{2},\frac{\delta}{2T})$-barely-robust learner $\bbA$ for $c$ w.r.t.~$\calU^{-1}(\calU)$, with sample complexity $\frac{4Tm_{\bbA}}{\epsilon}$, where $m_{\bbA}$ is the sample complexity of learner $\bbA$.
\end{customthm}

The result above shows that barely robust learning is \emph{sufficient} for strongly robust learning. An important question remains, however: is our proposed notion of barely robust learning \emph{necessary} for strongly robust learning? In particular, our proposed notion of barely robust learning requires $\beta$-robustness \wrt a ``larger'' perturbation set $\calU^{-1}(\calU)$, instead of the actual perturbation set $\calU$ that we care about. We provably show next that this is \emph{necessary}. 

\begin{customthm}{6}
For any $\calU$, learner $\bbB$, and $\epsilon\in(0,1/4)$, if $\bbB$ $(\epsilon,\delta)$-robustly-learns some unknown target concept $c$ w.r.t.~$\calU$, then there is a learner $\Tilde{\bbB}$ that $(\frac{1-\epsilon}{2}, 2\epsilon, 2\delta)$-barely-robustly-learns $c$ w.r.t. $\calU^{-1}(\calU)$.
\end{customthm}

This still does \emph{not} rule out the possibility that boosting robustness is possible even with the weaker requirement of $\beta$-robustness with respect to $\calU$. But we show next that, indeed, barely robust learning \wrt $\calU$ is \emph{not sufficient} for strongly robust learning \wrt $\calU$:

\begin{customthm}{8} 
There is a space $\calX$, a perturbation set $\calU$, and a class of concepts $\calC$ s.t. $\calC$ is $(\beta=\frac{1}{2},\epsilon=0,\delta)$-barely-robustly-learnable w.r.t~$\calU$, but $\calC$ is \emph{not} $(\epsilon,\delta)$-robustly-learnable w.r.t.~$\calU$ for any $\epsilon<1/2$.
\end{customthm}

Our results offer a new perspective on adversarially robust learning. We show that two seemingly unrelated problems: barely robust learning w.r.t. $\calU^{-1}(\calU)$ and strongly robust learning w.r.t. $\calU$, are in fact \emph{equivalent}. The following corollary follows from \prettyref{thm:boost-realizable} and \prettyref{thm:stronglytobarely}.
\begin{customcor}{I}
For any class $\calC$ and any perturbation set $\calU$, $\calC$ is {\em strongly} robustly learnable \wrt $\calU$ if and only if $\calC$ is {\em barely} robustly learnable \wrt $\calU^{-1}(\calU)$.
\end{customcor}

We would like to note that in our treatment of boosting robustness, having a separate robustness parameter $\beta$ and a natural error parameter $\epsilon$ allows us to consider regimes where $\beta < \frac{1}{2}$ and $\epsilon$ is small. This models typical scenarios in practice where learning algorithms are able to learn predictors with reasonably low natural error but the predictors are only barely robust. More generally, this allows us to explore the relationship between $\beta$ and $\epsilon$ in terms of boosting robustness (see \prettyref{sec:dis} for a more elaborate discussion).

\paragraph{Landscape of Boosting Robustness.} Our results reveal an interesting landscape for boosting robustness when put in context of prior work. When the robustness parameter $\beta > \frac{1}{2}$, it is known from prior work that $\beta$-robustness \wrt $\calU$ \emph{suffices} for boosting robustness \citep[see e.g.,][]{pmlr-v99-montasser19a,DBLP:journals/corr/abs-2103-01276}, which is witnessed by the \alphaBoost~algorithm \citep{schapire:12}. When the robustness parameter $\beta \leq \frac{1}{2}$, our results show that boosting is still \emph{possible}, but $\beta$-robustness \wrt $\calU^{-1}(\calU)$ is \emph{necessary} and we {\em cannot} boost robustness with $\beta$-robustness \wrt $\calU$. 

In fact, by combining our algorithm \RoBoost~with \alphaBoost, we obtain an even \emph{stronger} boosting result that only requires barely robust learners with a natural error parameter that does \emph{not} scale with the targeted robust error. Beyond that, our results imply that we can even boost robustness \wrt $\calU^{-1}(\calU)$. This is summarized in the following corollary which follows from \prettyref{thm:boost-realizable}, \prettyref{lem:weak-robust-learner}, and \prettyref{thm:stronglytobarely}.

\begin{customcor}{II}[Landscape of Boosting Robustness]
\label{cor:summary}
Let $\calC$ be a class of concepts. For fixed $\epsilon_0,\delta_0=(\frac13,\frac13)$ and any target $\epsilon < \epsilon_0$ and $\delta > \delta_0$:
\begin{enumerate}
    \item If $\calC$ is $(\beta,\frac{\beta\epsilon_0}{2},\frac{\beta\delta_0}{\ln(2/\epsilon_0)})$-barely-robustly-learnable w.r.t.~$\calU^{-1}(\calU)$, then $\calC$ is $(\epsilon_0,\delta_0)$-robustly-learnable w.r.t.~$\calU$.
    \item If $\calC$ is $(\epsilon_0,\delta_0)$-robustly-learnable w.r.t.~$\calU$, then $\calC$ is $(\epsilon,\delta)$-robustly learnable w.r.t.~$\calU$.
    \item If $\calC$ is $(\epsilon,\delta)$-robustly learnable w.r.t.~$\calU$, then $\calC$ is $(\frac{1-\epsilon}{2},2\epsilon,2\delta)$-barely-robustly-learnable w.r.t.~$\calU^{-1}(\calU)$.
\end{enumerate}
In particular, $1\Rightarrow 2 \Rightarrow 3$ reveals that we can also algorithmically boost robustness w.r.t.~$\calU^{-1}(\calU)$.
\end{customcor}

\paragraph{Extra Features.} In \prettyref{thm:boost-unlabeled}, we show that a variant of our boosting algorithm, \URoBoost, can boost robustness using \emph{unlabeled} data when having access to a barely robust learner $\bbA$ that is tolerant to small noise in the labels. In \prettyref{app:granular}, we discuss an idea of obtaining robustness at different levels of granularity through our boosting algorithm. Specifically, when $\calU(x)$ is a metric-ball around $x$ with radius $\gamma$, we can learn a predictor $\hat{h}$ with different robustness levels: $\gamma,\frac{\gamma}{2},\frac{\gamma}{4}, \dots$, in different regions of the distribution.

\subsection{Related Work} To put our work in context, the classic and pioneering works of \citep{k-thb-88,DBLP:journals/ml/Schapire90,DBLP:conf/colt/Freund90,DBLP:journals/jcss/FreundS97} explored the question of boosting the \emph{accuracy} of \emph{weak} learning algorithms, from accuracy slightly better than $\frac{1}{2}$ to arbitrarily high accuracy in the realizable PAC learning setting. Later works have explored boosting the accuracy in the agnostic PAC setting \citep[see e.g.,][]{DBLP:conf/nips/KalaiK09}. In this work, we are interested in the problem of boosting \emph{robustness} rather than accuracy. In particular, boosting robustness of learners $\bbA$ that are highly accurate on \emph{natural} examples drawn from the data distribution, but robust only on some $\beta$ fraction of data distribution. We consider this problem in the \emph{robust realizable} setting, that is, when the unknown target concept $c$ has zero robust risk $\Risk_\calU(c;D_c)=0$. 

We already know from prior work \citep[see e.g.,][]{pmlr-v99-montasser19a,DBLP:journals/corr/abs-2103-01276} that if the robustness parameter $\beta>\frac{1}{2}$, then barely robust learning \wrt $\calU$ implies strongly robust learning \wrt $\calU$. Essentially, in this case, we have \emph{weak} learners \wrt the robust risk $\Risk_\calU$, and the original boosting algorithms such as the $\alpha$-Boost algorithm \cite[Section 6.4.2]{schapire:12} can boost the robust risk. In this work, we focus on boosting \emph{barely} robust learners, i.e., mainly when the robustness parameter $\beta < 1/2$, but in general our algorithm works for any $0< \beta \leq 1$. 

\citet{pmlr-v99-montasser19a} studied the problem of adversarially robust learning (as in \prettyref{def:strongly-robust}). They showed that if a hypothesis class $\calC$ is PAC learnable non-robustly (i.e., $\calC$ has finite VC dimension), then $\calC$ is adversarially robustly learnable. This result, however, is not constructive and the robust learning algorithm given does not directly use a black-box non-robust learner. Later on, \citet{DBLP:conf/nips/MontasserHS20} studied a more constructive version of the same question: reducing strongly robust learning to \emph{non-robust} PAC learning when given access to black-box \emph{non-robust} PAC learners. This is different from the question we study in this work. In particular, we explore the relationship between strongly robust learning and barely robust learning, and we present a boosting algorithm for learners that already have some non-trivial robustness guarantee $\beta>0$.

\section{Preliminaries}
\label{sec:setup}
Let $\calX$ denote the instance space and $\calY$ denote the label space. We would like to be robust \wrt a perturbation set $\calU:\calX \to 2^{\calX}$, where $\calU(x)\subseteq \calX$ is the set of allowed adversarial perturbations that an adversary might replace $x$ with at test time. Denote by $\calU^{-1}$ the inverse image of $\calU$, where for each $z\in \calX$, $\calU^{-1}(z)=\SET{x\in\calX: z\in\calU(x)}$. Observe that for any $x,z\in\calX$ it holds that $z\in\calU(x) \Leftrightarrow x\in \calU^{-1}(z)$. Furthermore, when $\calU$ is symmetric, where for any $x,z\in \calX, z\in \calU(x)\Leftrightarrow x\in \calU(z)$, it holds that $\calU=\calU^{-1}$. For each $x\in\calX$, denote by $\calU^{-1}(\calU)(x)$ the set of all \emph{natural} examples $\Tilde{x}$ that share some adversarial perturbation $z$ with $x$, i.e., 
\begin{equation}
\label{eqn:youinverseyou}
    \calU^{-1}(\calU)(x) = \cup_{z\in \calU(x)} \calU^{-1}(z)=\SET{\Tilde{x}:\exists z\in \calU(x) \cap \calU(\Tilde{x})}.
\end{equation}
For example, when $\calU(x)=\B_\gamma(x)= \SET{z \in \calX: \rho(x,z)\leq \gamma}$ where $\gamma>0$ and $\rho$ is some metric on $\calX$ (e.g., $\ell_p$-balls), then $\calU^{-1}(\calU)(x)=\B_{2\gamma}(x)$.

For any classifier $h:\calX\to \calY$ and any $\calU$, denote by $\Rob_\calU(h)$ the \emph{robust region} of $h$ with respect to $\calU$ defined as:
\begin{equation}
\label{eqn:robustregion}
\Rob_\calU(h)\triangleq \SET{ x\in \calX: \forall z\in \calU(x), h(z)=h(x)}.
\end{equation}

\begin{defn} [Strongly Robust Learner]
\label{def:strongly-robust}
Learner $\bbB$ $(\epsilon, \delta)$-robustly-learns a concept $c:\calX\to\calY$ with respect to $\calU$ if there exists $ m(\epsilon,\delta)\in \bbN$ s.t.~for any distribution $D$ over $\calX$ satisfying $\Prob_{x\sim D}\insquare{\exists z\in \calU(x): c(z)\neq c(x)}=0$, with probability at least $1-\delta$ over $S=\SET{(x_i,c(x_i))}_{i=1}^{m}\sim D_c$, $\bbB$ outputs a predictor $\hat{h}=\bbB(S)$ satisfying:
\[\Risk_\calU(\hat{h};D_c)=\Prob_{x\sim D} \insquare{\exists z\in \calU(x): \hat{h}(z)\neq c(x)}\leq \epsilon.\]
\end{defn}

\section{Boosting a Barely Robust Learner to a Strongly Robust Learner}
\label{sec:forward-direction}

We present our main result in this section: \RoBoost\ is an algorithm for \emph{boosting} the \emph{robustness} of \emph{barely} robust learners. Specifically, in \prettyref{thm:boost-realizable}, we show that given a \emph{barely} robust learner $\bbA$ for some unknown target concept $c$ (according to \prettyref{def:barelyrobust}), it is \emph{possible} to strongly robustly learn $c$ with \RoBoost\ by making black-box oracle calls to $\bbA$. 

\begin{algorithm2e}[H]
\caption{\textup{$\beta$-RoBoost} --- Boosting {\em barely} robust learners.}
\label{alg:cascade}
\SetKwInput{KwInput}{Input}                
\SetKwInput{KwOutput}{Output}              
\SetKwFunction{pred}{CAS}
\SetKwFunction{rej}{RejectionSampling}
\SetKwFunction{sel}{SEL}
\DontPrintSemicolon
  \KwInput{Sampling oracle for distribution $D_c$, black-box $(\beta,\epsilon,\delta)$-barely-robust learner $\bbA$.}
Set $T=\frac{\ln(2/\epsilon)}{\beta}$, and $m=\max\SET{m_\bbA(\beta, \frac{\beta\epsilon}{2},\frac{\delta}{2T}), 4 \ln\inparen{\frac{2T}{\delta}}}$.\;
\While{$1 \leq t\leq T$}{
    Call \rej on $h_1,\dots, h_{t-1}$ and $m$, and let $\Tilde{S}_t$ be the returned dataset.\;
    \textbf{If $\Tilde{S}_t\neq \emptyset$, then} call learner $\bbA$ on $\Tilde{S}_t$ and let $h_t=\bbA(\Tilde{S}_t)$ be its output. \textbf{Otherwise}, break.
}

\KwOutput{The cascade predictor defined as $\CAS(h_1,\dots,h_T)(z)\triangleq G_{h_{s}}(z) \text{ where } s=\min\SET{1\leq t\leq T: G_{h_{t}}(z)\neq \perp}$, and \emph{selective classifiers}~~~
$G_{h_t}(z) \triangleq \begin{cases} 
                                                              y,& \text{if } \inparen{\exists y\in\calY}\inparen{\forall{\Tilde{x}\in\calU^{-1}(z)}}: h(\Tilde{x})=y;\\
                                                              \perp, &\text{otherwise.}
                                                           \end{cases}$.}
\BlankLine
  \SetKwProg{Fn}{}{:}{\KwRet}
  \Fn{\rej{predictors $h_1,\dots, h_t$, and sample size $m$}}{
    \For{$1\leq i\leq m$}{
    Draw samples $(x,y)\sim D$ until sampling an $(x_i,y_i)$ s.t.: $\forall_{t'\leq t}\exists_{z \in \calU(x_i)} G_{h_{t'}}(z)=\perp$.\;{\scriptsize\tcp*{sampling from the region of $D$ where \emph{all} predictors $h_1,\dots, h_{t}$ are \emph{not} robust.}}
    If this costs more than $\frac{4}{\epsilon}$ samples from $D$, abort and return an empty dataset $\Tilde{S}=\emptyset$.\;{\scriptsize\tcp*{If the mass of the non-robust region is small, then we can safely terminate.}}
  }
  Output dataset $\Tilde{S}=\SET{(x_1,y_1),\dots,(x_m,y_m)}$.\;
 }
\end{algorithm2e}
\begin{thm}
\label{thm:boost-realizable}
For any perturbation set $\calU$, \RoBoost\ $(\epsilon,\delta)$-robustly-learns any target concept $c$ w.r.t.~$\calU$ using $T= \frac{\ln(2/\epsilon)}{\beta}$ black-box oracle calls to any $(\beta,\frac{\beta\epsilon}{2},\frac{\delta}{2T})$-barely-robust learner $\bbA$ for $c$ w.r.t.~$\calU^{-1}(\calU)$, with total sample complexity
\[m(\epsilon,\delta)\leq \frac{4T\max\SET{m_{\bbA}(\beta,\frac{\beta\epsilon}{2},\frac{\delta}{2T}), 4 \ln\inparen{\frac{2T}{\delta}}}}{\epsilon}.\]
\end{thm}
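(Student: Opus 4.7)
My plan is to track a nested family of ``uncovered'' regions across rounds, show it shrinks geometrically, and bound the total natural-error contribution accumulated by the base learner. Set, for $s \geq 0$,
\[B_s \;=\; \SET{x \in \calX : \forall\, s' \leq s,\; \exists z \in \calU(x),\; G_{h_{s'}}(z) = \perp},\]
so that $B_0 = \calX$ and $B_s \subseteq B_{s-1}$. The rejection sampler at round $t$ accepts exactly the points of $B_{t-1}$, so conditional on not aborting, $\Tilde{S}_t$ is i.i.d.\ from $D\mid_{B_{t-1}}$. A Chernoff bound on the $\tfrac{4}{\epsilon}$-per-sample draw budget shows the sampler succeeds whenever $\Pr_D[B_{t-1}] \geq \epsilon/2$; if it aborts then $\Pr_D[B_{t-1}] < \epsilon/2$ with high probability, and I can terminate the cascade, treating the abort as sealing in at most an $\epsilon/2$ slack in the final risk.

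Next I would prove geometric shrinkage of $B_t$. The key structural observation is that if $x \in \Rob_{\calU^{-1}(\calU)}(h_t)$, then for every $z \in \calU(x)$ the inclusion $\calU^{-1}(z) \subseteq \calU^{-1}(\calU)(x)$ forces $h_t$ to be constant on $\calU^{-1}(z)$, so $G_{h_t}(z) = h_t(x) \neq \perp$. Hence $\Rob_{\calU^{-1}(\calU)}(h_t) \cap B_{t-1} \subseteq B_{t-1} \setminus B_t$, and since $\bbA$ is a $(\beta,\tfrac{\beta\epsilon}{2},\tfrac{\delta}{2T})$-barely-robust learner invoked on $D\mid_{B_{t-1}}$, I get $\Pr_D[B_t \mid B_{t-1}] \leq 1-\beta$ up to the $\tfrac{\delta}{2T}$ failure event. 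Iterating with $T = \ln(2/\epsilon)/\beta$ gives $\Pr_D[B_T] \leq (1-\beta)^T \leq e^{-\beta T} \leq \epsilon/2$.

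For the cascade's correctness on $B_T^c$ I would use the identity $G_{h_s}(z) = h_s(x)$ whenever $G_{h_s}(z) \neq \perp$ and $x \in \calU^{-1}(z)$: the selective label of $h_s$ at $z$ is forced to be $h_s(x)$ because $x$ itself lies in $\calU^{-1}(z)$. For $x \notin B_T$ and any $z \in \calU(x)$, the cascade therefore outputs $h_{s^{**}}(x)$ for some $s^{**} \leq s^*(x) := \min\SET{s : x \notin B_s}$, and is robustly correct at $x$ as soon as $h_s(x) = c(x)$ for every $s \leq s^*(x)$. Since $\SET{x : s \leq s^*(x)} = B_{s-1}$, the natural-error clause of $\bbA$ yields
\[\Pr_D[\text{cascade errs on some } z \in \calU(x),\; x \notin B_T] \;\leq\; \sum_{s=1}^{T} \tfrac{\beta\epsilon}{2}\cdot \Pr_D[B_{s-1}] \;\leq\; \tfrac{\beta\epsilon}{2}\cdot \tfrac{1}{\beta} \;=\; \tfrac{\epsilon}{2},\]
using the geometric sum $\sum_s (1-\beta)^{s-1} \leq 1/\beta$. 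Combining with the shrinkage bound gives $\Risk_\calU(\CAS;D_c) \leq \epsilon$; a union bound over the $T$ calls to $\bbA$ and the $T$ rejection-sampling events at $\tfrac{\delta}{2T}$ each keeps total failure probability at $\delta$, and the $\tfrac{4Tm}{\epsilon}$ sample complexity follows from $T$ rounds $\times$ $m$ accepted samples per round $\times$ $\tfrac{4}{\epsilon}$ draws per accepted sample.

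I expect the delicate step to be the cascade-correctness argument: a priori the cascade could fail at some $z \in \calU(x)$ because an early $h_s$ returns a wrong but non-$\perp$ selective label, and there is no direct guarantee on $h_s$ outside $B_{s-1}$. The identity $G_{h_s}(z) = h_s(x)$, which relies on robust realizability of $c$ so that $c$ is consistent across $\calU^{-1}(\calU)(x)$, is what converts any such ``wrong selective label on $z$'' into a natural-error event at the associated $x$ that provably lives inside $B_{s-1}$; only then does the geometric sum close the error bound.
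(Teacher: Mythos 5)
Your proposal is correct and follows essentially the same route as the paper's proof: the same selective-classifier key lemma (forcing $G_{h_t}(z) = h_t(x)$ whenever $G_{h_t}(z) \neq \perp$ and $z \in \calU(x)$), the same identification of the rejection-sampled region $B_{t-1}$ with the paper's $\bar{R}_{1:t-1}$, the same geometric shrinkage $(1-\beta)^T$, and the same per-round natural-error budget summed over $\Pr_D[B_{s-1}]$ to get $\epsilon'/\beta$. Your bookkeeping (a direct union bound over the sufficient condition ``$h_s(x)=c(x)$ for all $s \le s^*(x)$'') is a mild repackaging of the paper's explicit $R_t \vee A_t$ decomposition, and your sample-complexity accounting ($T$ rounds $\times\, m$ accepted samples $\times\, 4/\epsilon$ budget per accepted sample, with a Chernoff argument guarding the abort) matches the paper's; the only minor inaccuracy is the closing aside that the identity $G_{h_s}(z)=h_s(x)$ ``relies on robust realizability of $c$'' — it is purely definitional, while realizability is what licenses invoking $\bbA$'s guarantee on the conditional distributions.
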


In fact, we present next an even stronger result for boosting $(\beta,\epsilon_0,\delta_0)$-barely-robust-learners with fixed error $\epsilon_0=\frac{\beta}{6}$ and confidence $\delta_0=\frac{\beta}{6\ln(6)}$. This is established by combining two boosting algorithms: \RoBoost\ from \prettyref{thm:boost-realizable} and \alphaBoost~from earlier work \citep[see e.g.,][]{pmlr-v99-montasser19a,schapire:12} which is presented in \prettyref{app:stronger-boosting} for convenience. The main idea is to perform two layers of boosting. In the first layer, we use \RoBoost~to get a $(\frac{1}{3},\frac{1}{3})$-robust-learner w.r.t.~$\calU$ from a $(\beta,\epsilon_0,\delta_0)$-barely-robust-learner $\bbA$ w.r.t.~$\calU^{-1}(\calU)$. Then, in the second layer, we use \alphaBoost~to boost \RoBoost~from a $(\frac{1}{3},\frac{1}{3})$-robust-learner to an $(\epsilon,\delta)$-robust-learner w.r.t.~$\calU$.

\begin{cor}
\label{cor:stronger-boosting}
For any perturbation set $\calU$, \alphaBoost~combined with \RoBoost~$(\epsilon,\delta)$-robustly-learn any target concept $c$ w.r.t.~$\calU$ using $T = O(\log(m)\inparen{\log(1/\delta)+\log\log m})\cdot \frac{1}{\beta}$ black-box oracle calls to any $(\beta,\frac{\beta}{6},\frac{\beta}{6\ln(6)})$-barely-robust-learner $\bbA$ for $c$ w.r.t.~$\calU^{-1}(\calU)$, with total sample complexity
\[m(\epsilon,\delta)=O\inparen{\frac{m_0}{\beta\epsilon}\log^2\inparen{\frac{m_0}{\beta\epsilon}}+\frac{\log(1/\delta)}{\epsilon}}, \text{ where }m_0=\max\SET{m_{\bbA}\inparen{\beta,\frac{\beta}{6},\frac{\beta}{6\ln(6)}},4\ln\inparen{\frac{6\ln(6)}{\beta}}}.\]
\end{cor}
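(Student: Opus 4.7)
The strategy is to cascade two boosting layers, exactly as sketched in the paragraph preceding the corollary. First I would apply \RoBoost\ (Theorem \ref{thm:boost-realizable}) with the constant target parameters $\epsilon_1=\tfrac13$ and $\delta_1=\tfrac13$. Substituting into Theorem \ref{thm:boost-realizable} gives $T_1=\ln(6)/\beta$ internal rounds and requires a $(\beta,\tfrac{\beta\epsilon_1}{2},\tfrac{\delta_1}{2T_1})=(\beta,\tfrac{\beta}{6},\tfrac{\beta}{6\ln(6)})$-barely-robust learner \wrt $\calU^{-1}(\calU)$, which matches the hypothesis on $\bbA$ exactly. The output is a \emph{strongly} robust learner $\bbA'$ \wrt $\calU$ that achieves robust risk at most $\tfrac13$ with probability at least $\tfrac23$, costs $T_1=O(1/\beta)$ calls to $\bbA$, and uses a sample of size $O(m_0/\beta)$ where $m_0$ is as in the statement.

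Next I would treat $\bbA'$ as a weak robust learner with edge $\gamma=\tfrac16$ and invoke \alphaBoost\ on a fresh labeled sample of size $m$ drawn from $D_c$. The standard analysis of \alphaBoost\ yields $T_2=O(\log(m)/\gamma^2)=O(\log m)$ rounds: at each round $t$ it produces a reweighted empirical distribution $D_t$ on the base sample, feeds it to the weak learner, and at the end takes the majority vote $\MAJ(h_1,\dots,h_{T_2})$. The empirical robust risk of this majority vote is $0$, and a uniform-convergence argument over the class of $T_2$-majority votes of outputs of $\bbA'$ gives true robust risk at most $\epsilon$ provided $m$ is at least $\widetilde O(T_2\, m_0 / (\beta\epsilon) + \log(1/\delta)/\epsilon)$; substituting $T_2=O(\log m)$ and solving the implicit inequality in $m$ produces the $\log^2(m_0/(\beta\epsilon))$ factor in the stated sample bound.

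The delicate step is that $\bbA'$ succeeds only with probability $\tfrac23$, whereas \alphaBoost\ needs each of its $T_2$ invocations of the weak learner to succeed with probability $1-\delta/(2T_2)$. I would handle this by the standard confidence-boosting reduction: in each round, run $\bbA'$ independently $k=O(\log(T_2/\delta))=O(\log\log m + \log(1/\delta))$ times on fresh subsamples from the current $D_t$ (obtained by rejection or importance sampling against the base sample), and select the candidate with smallest robust error estimated on a fresh validation batch of size $O(\log(kT_2/\delta)/\epsilon)$. A Chernoff-plus-union bound ensures that the selected candidate has robust risk at most $\tfrac13$ with the required confidence, so the total number of black-box calls to $\bbA$ becomes $T_2 \cdot k \cdot T_1 = O\!\inparen{\log(m)(\log(1/\delta)+\log\log m)/\beta}$, matching the claim.

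The main obstacle I expect is the parameter bookkeeping between the two layers: one must verify that the base sample of size $m$ is large enough to simulate all $k T_2$ reweighted draws consumed by $\bbA'$ without incurring an additional failure event, and then carefully union the failure probabilities across (a)~\RoBoost's internal rejection-sampling step and $T_1$ calls to $\bbA$, (b)~the $k T_2$ confidence-amplified weak calls, and (c)~the final validation and uniform-convergence steps, so that the overall failure probability stays below $\delta$. Everything else is routine substitution into Theorem \ref{thm:boost-realizable} together with the standard margin-based generalization bound for \alphaBoost.
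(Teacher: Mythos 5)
Your two-layer architecture is exactly the paper's: run \RoBoost\ with constant targets $(\epsilon_0,\delta_0)=(\tfrac13,\tfrac13)$, which (by substitution into Theorem~\ref{thm:boost-realizable}) converts the hypothesized $(\beta,\tfrac{\beta}{6},\tfrac{\beta}{6\ln 6})$-barely-robust $\bbA$ into a $(\tfrac13,\tfrac13)$-robust learner $\bbA'$ with $T_1=\ln(6)/\beta$ inner calls and sample size $O(m_0/\beta)$; then feed $\bbA'$ into \alphaBoost\ with $O(\log m)$ outer rounds. The confidence amplification you describe is also what the paper's Lemma~\ref{lem:weak-robust-learner} does --- the only cosmetic difference is that the paper checks $\Risk_\calU(h_t;D_t)$ directly on the discrete reweighted empirical distribution $D_t$ rather than on a held-out validation batch, which avoids an extra moving part. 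Your oracle-call count $O(\log m(\log(1/\delta)+\log\log m))/\beta$ and your sample-complexity arithmetic both land on the corollary's bounds.

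The one step that would not go through as literally written is the appeal to ``a uniform-convergence argument over the class of $T_2$-majority votes of outputs of $\bbA'$'' to convert zero empirical robust risk into true robust risk $\leq\epsilon$. Uniform convergence of the robust loss can fail even for hypothesis classes of finite VC dimension --- this is the central obstruction that motivates this line of work, and it is exactly why the paper instead uses the \emph{sample compression} generalization bound of Lemma~\ref{lem:robust-compression}. The correct observation is that each $h_t$ produced in round $t$ is $\bbA'(S'_t)$ for some subsample $S'_t\subseteq S$ of size $O(m_0/\beta)$, so $\MAJ(h_1,\dots,h_{T_2})$ is reconstructible from a compression set of size $O(m_0\log(m)/\beta)$; Lemma~\ref{lem:robust-compression} then gives $\Risk_\calU = O\inparen{(m_0\log^2(m)/\beta+\log(1/\delta))/m}$, and solving the resulting self-referential inequality in $m$ produces the $\log^2$ factor you noted. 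Your arithmetic is right; the justification for the generalization step should be compression, not uniform convergence.
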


We begin with describing the intuition behind \RoBoost, and then we will prove \prettyref{thm:boost-realizable} and \prettyref{cor:stronger-boosting}.
\paragraph{High-level Strategy.} Let $D_c$ be the unknown distribution we want to robustly learn. Since $\bbA$ is a \emph{barely} robust learner for $c$, calling learner $\bbA$ on an i.i.d. sample $S$ from $D_c$ will return a predictor $h_1$, where $h_1$ is robust only on a region $R_1\subseteq \calX$ of small mass $\beta>0$ under distribution $D$, $\Prob_{x\sim D}[x\in R_1]\geq \beta$. We can trust the predictions of $h_1$ in the region $R_1$, but not in the complement region $\bar{R}_1$ where it is not robust. For this reason, we will use a \emph{selective} classifier $G_{h_1}$ (see \prettyref{eqn:selective-classifier}) which makes predictions on \emph{all} adversarial perturbations in region $R_1$, but \emph{abstains} on adversarial perturbations not from $R_1$. In each round $t>1$, the strategy is to focus on the region of distribution $D$ where \emph{all} predictors $h_1,\dots, h_{t-1}$ returned by $\bbA$ so far are \emph{not} robust. By \emph{rejection sampling}, \RoBoost\ gives barely robust learner $\bbA$ a sample $\Tilde{S}_t$ from this non-robust region, and then $\bbA$ returns a predictor $h_t$ with robustness at least $\beta$ in this region. Thus, in each round, we shrink by a factor of $\beta$ the mass of region $D$ where the predictors learned so far are not robust. After $T$ rounds, \RoBoost\ outputs a \emph{cascade} of \emph{selective} classifiers $G_{h_1}, \dots, G_{h_T}$ where roughly each selective classifier $G_{h_t}$ is responsible for making predictions in the region where $h_t$ is robust.

As mentioned above, one of the main components in our boosting algorithm is \emph{selective classifiers} that essentially abstain from predicting in the region where they are {\em not} robust. Formally, for any classifier $h:\calX\to\calY$ and any $\calU$, denote by $G_h:\calX \to \calY \cup \SET{\perp}$ a \emph{selective} classifier defined as:
\begin{equation}
\label{eqn:selective-classifier}
G_h(z) \triangleq \begin{cases} 
                                                              y,& \text{if } \inparen{\exists y\in\calY}\inparen{\forall{\Tilde{x}\in\calU^{-1}(z)}}: h(\Tilde{x})=y;\\
                                                              \perp, &\text{otherwise.}
                                                           \end{cases}
\end{equation}  
Before proceeding with the proof of \prettyref{thm:boost-realizable}, we prove the following key Lemma about \emph{selective classifier} $G_h$ which states that $G_h$ will {\em not} abstain in the region where $h$ is robust, and whenever $G_h$ predicts a label for a perturbation $z$ then we are guaranteed that this is the same label that $h$ predicts on the corresponding natural example $x$ where $z\in\calU(x)$. 

\begin{lem}
\label{lem:selective-guarantee}
For any distribution $D$ over $\calX$, any $\calU$, and any $\beta>0$, given a classifier $h:\calX\to\calY$ satisfying $\Prob_{x\sim D}\insquare{x\in \Rob_{\calU^{-1}(\calU)}(h)}\geq \beta$, then the \emph{selective} classifier $G_h:\calX\to\calY\cup\SET{\perp}$ (see \prettyref{eqn:selective-classifier}) satisfies:
\[\Prob_{x\sim D}\insquare{\forall z\in \calU(x): G_h(z)=h(x)}\geq \beta~\text{and}~\Prob_{x\sim D}\insquare{\forall z\in\calU(x): G_h(z)=h(x) \vee G_h(z)=\perp}=1.\]
\end{lem}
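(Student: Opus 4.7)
The lemma has two claims: one almost-sure statement about consistency of $G_h$ with $h$, and one probability-$\beta$ statement about non-abstention. I would prove them in that order, since the probability-$1$ claim is a pure definitional unfolding and the probability-$\beta$ claim reuses the same chain of inclusions.

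For the almost-sure statement, the plan is a pointwise argument that works for every $x\in\calX$. Fix an arbitrary $x$ and an arbitrary $z\in\calU(x)$. By definition of the inverse perturbation set, $z\in\calU(x)$ is equivalent to $x\in\calU^{-1}(z)$, so in particular $x$ is one of the natural examples quantified over in the definition of $G_h(z)$. Now split on cases: if $G_h(z)=\perp$, the disjunction in the lemma is already satisfied; otherwise $G_h(z)=y$ for some $y\in\calY$ witnessing the existential, which by definition of $G_h$ forces $h(\tilde x)=y$ for \emph{every} $\tilde x\in\calU^{-1}(z)$. Plugging in $\tilde x=x$ yields $h(x)=y=G_h(z)$. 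Since this holds for every $x$ and every $z\in\calU(x)$, the stated probability is $1$.

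For the first claim, I would show the stronger pointwise statement that every $x\in\Rob_{\calU^{-1}(\calU)}(h)$ satisfies $\forall z\in\calU(x): G_h(z)=h(x)$; the lower bound $\beta$ then follows immediately from the hypothesis $\Pr_{x\sim D}[x\in\Rob_{\calU^{-1}(\calU)}(h)]\ge\beta$. Fix such an $x$ and any $z\in\calU(x)$. I want to verify the existential defining $G_h(z)$ with witness $y=h(x)$, i.e., that $h(\tilde x)=h(x)$ for every $\tilde x\in\calU^{-1}(z)$. For any such $\tilde x$ we have $z\in\calU(\tilde x)$, and also $z\in\calU(x)$ by assumption; therefore $z\in\calU(x)\cap\calU(\tilde x)$, which by \prettyref{eqn:youinverseyou} means $\tilde x\in\calU^{-1}(\calU)(x)$. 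Because $x$ lies in the robust region of $h$ with respect to $\calU^{-1}(\calU)$, this forces $h(\tilde x)=h(x)$. The witness $y=h(x)$ is therefore valid and $G_h(z)=h(x)$ as required.

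The only conceptual step worth flagging is the set-theoretic chain $z\in\calU(\tilde x)\cap\calU(x)\Rightarrow \tilde x\in\calU^{-1}(\calU)(x)$; after that, everything reduces to unfolding the definitions of $G_h$, $\Rob$, and $\calU^{-1}(\calU)$. I do not foresee any real obstacle; the proof should be short and purely symbol-pushing.
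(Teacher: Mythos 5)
Your proof is correct and takes essentially the same approach as the paper: both claims are established by pointwise unfolding of the definitions of $G_h$, $\Rob_{\calU^{-1}(\calU)}$, and $\calU^{-1}(\calU)$, and the probability bounds follow immediately. You merely reverse the order of the two claims and spell out the chain $z\in\calU(x)\cap\calU(\tilde x)\Rightarrow\tilde x\in\calU^{-1}(\calU)(x)$ a bit more explicitly than the paper does.
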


\begin{proof}
Observe that, by the definition of the robust region of $h$ w.r.t. $\calU^{-1}(\calU)$ (see \prettyref{eqn:robustregion}), for any $x\in\Rob_{\calU^{-1}(\calU)}(h)$ the following holds:$\inparen{\forall z\in \calU(x)}\inparen{\forall\Tilde{x}\in \calU^{-1}(z)}: h(\Tilde{x})=h(x)$. By the definition of the \emph{selective} classifier $G_h$ (see \prettyref{eqn:selective-classifier}), this implies that:~$\forall z\in \calU(x), G_h(z) = h(x)$.

Since $\Prob_{x\sim D}\insquare{x\in \Rob_{\calU^{-1}(\calU)}(h)}\geq \beta$, the above implies $\Prob_{x\sim D}\insquare{\forall z\in \calU(x): G_h(z)=h(x)}\geq \beta$. 
Furthermore, for any $x\in \calX$ and any $z\in \calU(x)$, by definition of $\calU^{-1}$, $x\in\calU^{-1}(z)$. Thus, by definition of $G_h$ (see \prettyref{eqn:selective-classifier}), if $G_h(z)=y$ for some $y\in \calY$, then it holds that $h(x)=y$. Combined with the above, this implies that
\[
    \Prob_{x\sim D}\insquare{\forall z\in\calU(x): G_h(z)=h(x) \vee G_h(z)=\perp}=1.
\]
\end{proof}

We are now ready to proceed with the proof of \prettyref{thm:boost-realizable}.

\begin{proof}[of \prettyref{thm:boost-realizable}.]
Let $\calU$ be an arbitrary adversary, and $\bbA$ a $(\beta, \epsilon,\delta)$-barely-robust learner for some \emph{unknown} target concept $c:\calX\to\calY$ with respect to $\calU^{-1}(\calU)$. We will show that \RoBoost\  $(\epsilon,\delta)$-robustly-learns $c$ with respect to $\calU$. Let $D$ be some unknown distribution over $\calX$ such that $\Prob_{x\sim D}\insquare{\exists z\in \calU(x): c(z)\neq c(x)}=0$. Let $\epsilon>0$ be our target robust error, and $\epsilon'$ be the error guarantee of learner $\bbA$ (which we will set later to be $\frac{\beta\epsilon}{2}$).

Without loss of generality, suppose that \RoBoost\ ran for $T=\frac{\ln(2/\epsilon)}{\beta}$ rounds (Step 5 takes care of the scenario where progress is made faster). Let $h_1=\bbA(\Tilde{S}_1),\dots, h_{T}=\bbA(\Tilde{S}_T)$ be the predictors returned by learner $\bbA$ on rounds $1\leq t \leq T$. For any $1\leq t \leq T$ and any $x\in \calX$, denote by ${R}_t$ the event that $x \in \Rob_{\calU^{-1}(\calU)}(h_t)$, and by $\Bar{R}_t$ the event that $x \notin \Rob_{\calU^{-1}(\calU)}(h_t)$ (as defined in \prettyref{eqn:robustregion}). Observe that by properties of $\bbA$ (see \prettyref{def:barelyrobust}), we are guaranteed that
\begin{equation}
\label{eqn:barelyrobustguarantee}
    \forall 1 \leq t \leq T: \Prob_{x\sim D_t}\insquare{R_t}\geq \beta \text{ and } \Prob_{x\sim D_t}\insquare{h_t(x)\neq c(x)}\leq \epsilon',
\end{equation}
where $D_t$ is the distribution from which $\Tilde{S}_t$ is drawn. In words, distribution $D_t$ is a conditional distribution focusing on the region of distribution $D$ where all predictors $h_1,\dots, h_{t-1}$ are \emph{non-robust}. Specifically, for any $x\sim D$, in case $x\in R_1 \cup \dots \cup R_{t-1}$ (i.e., there is a predictor among $h_1,\dots, h_{t-1}$ that is robust on $x$), then \prettyref{lem:selective-guarantee} guarantees that one of the selective classifiers $G_{h_1},\dots, G_{h_{t-1}}$ will not abstain on any $z\in \calU(x)$: $\exists t'\leq t-1, \forall z\in \calU(x), G_{h_{t'}}(z)\neq \perp$. In case $x \in \Bar{R}_1 \cap \cdots \cap \bar{R}_{t-1}$, then, by definition of $\Bar{R}_1,\dots, \Bar{R}_{t-1}$, each of the selective classifiers $G_{h_1},\dots, G_{h_{t-1}}$ can be forced to abstain: $\forall t'\leq t-1, \exists z\in \calU(x), G_{h_{t'}}(z)=\perp$. Thus, in Step 7 of \RoBoost, \emph{rejection sampling} guarantees that $\Tilde{S}_t$ is a sample drawn from distribution $D$ conditioned on the region $\Bar{R}_{1:t-1}\triangleq\Bar{R}_1 \cap \dots \cap\Bar{R}_{t-1}$. Formally, distribution $D_t$ is defined such that for any measurable event $E$:
\begin{equation}
\label{eqn:cond-dist}
    \Prob_{x\sim D_t}\insquare{E}\triangleq \Prob_{x\sim D} \insquare{E | \Bar{R}_1 \cap \dots \cap\Bar{R}_{t-1}}.
\end{equation}
\paragraph{Low Error on Natural Examples.} \prettyref{lem:selective-guarantee} guarantees that whenever any of the \emph{selective} classifiers $G_{h_1},\dots,G_{h_{T}}$ (see \prettyref{eqn:selective-classifier}) chooses to classify an instance $z$ at test-time their prediction will be correct with high probability. We consider two cases. First, in case of event ${\rm R}_t$, $x\in \Rob_{\calU^{-1}(\calU)}(h_t)$, and therefore, by \prettyref{lem:selective-guarantee}, $\forall z\in \calU(x): G_{h_t}(z)=h_{t}(x)$. Thus, \prettyref{eqn:cond-dist} implies that $\forall 1 \leq t \leq T$:
\begin{equation}
\begin{split}
\label{eqn:naterr-robust}
     &\quad\Prob_{x\sim D} \insquare{ {R}_t \wedge \inparen{\exists{z \in \calU(x)}: (G_{h_t}(z)\neq \perp) \wedge (G_{h_t}(z) \neq c(x))} |\bar{R}_{1:t-1}} \\
     &= \Prob_{x\sim D_t} \insquare{ {R}_t \wedge \inparen{\exists{z \in \calU(x)}: (G_{h_t}(z)\neq \perp) \wedge (G_{h_t}(z) \neq c(x))}}\\
    &= \Prob_{x\sim D_t} \insquare{ {R}_t \wedge (h_t(x) \neq c(x)) }.
\end{split}
\end{equation}

Second, in case of the complement event $\Bar{R}_t$, $x\notin \Rob_{\calU^{-1}(\calU)}(h_t)$. Therefore, by \prettyref{lem:selective-guarantee}, $\forall z\in \calU(x)$, we have $G_{h_t}(z) = \perp \text{ or } G_{h_t}(z) = h_t(x)$. Thus,
\begin{equation}
\label{eqn:naterr-nonrobust}
\begin{split}
    &\quad \Prob_{x\sim D} \insquare{ \Bar{R}_t \wedge \inparen{\exists{z \in \calU(x)}: (G_{h_t}(z)\neq \perp) \wedge (G_{h_t}(z) \neq c(x))} |\bar{R}_{1:t-1}} \\
    &= \Prob_{x\sim D_t} \insquare{\Bar{R}_t \wedge \inparen{\exists{z \in \calU(x)}: (G_{h_t}(z)\neq \perp) \wedge (G_{h_t}(z) \neq c(x))}}\leq \Prob_{x\sim D_t} \insquare{\Bar{R}_t \wedge (h_t(x) \neq c(x))}.
\end{split}
\end{equation}

By law of total probability \prettyref{eqn:naterr-robust}, \prettyref{eqn:naterr-nonrobust}, and \prettyref{eqn:barelyrobustguarantee}, 
\begin{equation}
\label{eqn:naterr-total}
\begin{split}
    \Prob_{x\sim D} &\insquare{ ({R}_t \vee \bar{R}_t) \wedge \inparen{\exists{z \in \calU(x)}: (G_{h_t}(z)\neq \perp) \wedge (G_{h_t}(z) \neq c(x))} |\bar{R}_{1:t-1}}\\
    &\leq \Prob_{x\sim D_t} \insquare{R_t \wedge h_t(x) \neq c(x)} + \Prob_{x\sim D_t} \insquare{\Bar{R}_t \wedge h_t(x) \neq c(x)} = \Prob_{x\sim D_t} \insquare{h_t(x) \neq c(x)} \leq \epsilon'.
\end{split}
\end{equation}

\paragraph{Boosted Robustness.} We claim that for each $1 \leq t \leq T: \Prob_{x\sim D}\insquare{\Bar{R}_{1:t}} \leq (1-\beta)^t$.  We proceed by induction on the number of rounds $1\leq t\leq T$.
In the base case, when $t=1$, $D_1=D$ and by \prettyref{eqn:barelyrobustguarantee}, we have $\Prob_{x\sim D}\insquare{R_1}\geq \beta$ and therefore $\Prob_{x \sim D}\insquare{\Bar{R}_1}\leq 1-\beta$. 

When $t>1$, again by \prettyref{eqn:barelyrobustguarantee}, we have that $\Prob_{x\sim D_t}\insquare{R_t}\geq \beta$ and therefore, by \prettyref{eqn:cond-dist}, $\Prob_{x \sim D}\insquare{\Bar{R}_t | \Bar{R}_{1:t-1}}=\Prob_{x \sim D_t}\insquare{\Bar{R}_t}\leq 1-\beta$. Finally, by the inductive hypothesis and Bayes' rule, we get
\begin{equation}
\label{eqn:boosted-robustness}
    \Prob_{x \sim D}\insquare{\Bar{R}_{1:t}} = \Prob_{x \sim D}\insquare{\Bar{R}_t | \Bar{R}_{1:t-1}}\Prob_{x \sim D}\insquare{\Bar{R}_{1:t-1}}\leq \inparen{1-\beta}\inparen{1-\beta}^{t-1}=\inparen{1-\beta}^t.
\end{equation}

\paragraph{Analysis of Robust Risk.} For each $1\leq t \leq T$, let $$A_t = \SET{x \in \bar{R}_t: \exists z\in \calU(x) \text{ s.t. }G_{h_{t}(z)}\neq \perp \wedge \forall_{t'<t} G_{h_{t'}}(z)=\perp}$$
denote the \emph{non-robust} region of classifier $h_t$ where the \emph{selective} classifier $G_{h_t}$ does not abstain but all selective classifiers $G_{h_1},\dots, G_{h_{t-1}}$ abstain. By the law of total probability, we can analyze the robust risk of the cascade predictor $\CAS(h_1,\dots,h_T)$ by partitioning the space into the \emph{robust} and \emph{non-robust} regions of $h_1,\dots,h_{T}$. Specifically, by the structure of the cascade predictor $\CAS(h_{1:T})$, each  $x\sim D$ such that $\exists z\in\calU(x)$ where $\CAS(h_{1:T})(z) \neq c(x)$ satisfies the following condition:
\begin{equation*}
\begin{split}
\exists_{z\in \calU(x)}\hspace{-0.05cm}:\hspace{-0.05cm} \CAS(h_{1:T})(z) \neq c(x) \Rightarrow \inparen{\exists_{1\leq t\leq T}}\inparen{\exists_{z\in \calU(x)}}\hspace{-0.05cm}:\hspace{-0.05cm} \forall_{t'<t} G_{h_t'}(z) = \perp \wedge G_{h_t}(z)\neq \perp \wedge G_{h_t}(z)\neq c(x).
\end{split}
\end{equation*}
Thus, each  $x\sim D$ such that $\exists z\in\calU(x)$ where $\CAS(h_{1:T})(z) \neq c(x)$ can be mapped to one (or more) of the following regions:
\[\underbrace{R_1 \vee A_1}_{G_{h_1}\text{ does not abstain}} ~~|~~ \underbrace{\bar{R}_1 \wedge \inparen{R_2 \vee A_2}}_{_{G_{h_2}\text{ does not abstain}}} ~~|~~ \bar{R}_{1:2} \wedge \inparen{R_3 \vee A_3} ~~|~~ \dots ~~|~~ \underbrace{\bar{R}_{1:T-1} \wedge \inparen{R_T \vee A_T}}_{G_{h_T}\text{ does not abstain}} ~~|~~ \bar{R}_{1:T}.\]
We will now analyze the robust risk based on the above decomposition:
\begin{equation}
\label{eqn:cas-risk-2}
\begin{split}
    &\Prob_{x\sim D} \insquare{ \exists z\in \calU(x): \CAS(h_{1:T})(z) \neq c(x) }\\
    &\leq \sum_{t=1}^{T+1} \Prob_{x\sim D} \insquare{\bar{R}_{1:t-1}\wedge \inparen{R_t \vee A_t}\wedge \inparen{\exists z\in \calU(x): \CAS(h_{1:T})(z) \neq c(x)}}\\
    &\leq \sum_{t=1}^{T} \Prob_{x\sim D} \insquare{\bar{R}_{1:t-1}\wedge \inparen{R_t \vee A_t}\wedge \inparen{\exists z\in \calU(x): \CAS(h_{1:T})(z) \neq c(x)}} + \Prob_{x\sim D} \insquare{\bar{R}_{1:T}}\\
    &\overset{(i)}{\leq} \sum_{t=1}^{T} \Prob_{x\sim D} \insquare{\bar{R}_{1:t-1}\wedge \inparen{R_t \vee A_t}\wedge \inparen{\exists z\in \calU(x): G_{h_t}(z)\neq \perp \wedge G_{h_t}(z)\neq c(x)}} + \Prob_{x\sim D} \insquare{\bar{R}_{1:T}}\\
    &= \sum_{t=1}^{T} \Prob_{x\sim D} \insquare{\bar{R}_{1:t-1}}\Prob_{x\sim D}\insquare{\inparen{R_t \hspace{-0.05cm}\vee\hspace{-0.05cm} A_t}\wedge \inparen{\exists z\in \calU(x)\hspace{-0.05cm}:\hspace{-0.05cm} G_{h_t}(z)\neq \perp \wedge G_{h_t}(z)\neq c(x)}|\bar{R}_{1:t-1}} \hspace{-0.05cm}+\hspace{-0.05cm} \Prob_{x\sim D} \insquare{\bar{R}_{1:T}}\\
    &\overset{(ii)}{\leq} \sum_{t=1}^{T} \inparen{1-\beta}^{t-1}\epsilon' + (1-\beta)^T=\epsilon'\sum_{t=1}^{T} \inparen{1-\beta}^{t-1}=\epsilon'\frac{1-(1-\beta)^{T}}{1-(1-\beta)} + (1-\beta)^T\leq \frac{\epsilon'}{\beta} + (1-\beta)^T,
\end{split}
\end{equation}
where inequality $(i)$ follows from the definitions of $\bar{R}_{1:t-1}$, $R_t$, and $A_t$, and inequality $(ii)$ follows from \prettyref{eqn:boosted-robustness} and \prettyref{eqn:naterr-total}. It remains to choose $T$ and $\epsilon'$ such that the robust risk is at most $\epsilon$.

\paragraph{Sample and Oracle Complexity.} It suffices to choose $T=\frac{\ln(2/\epsilon)}{\beta}$ and $\epsilon'=\frac{\epsilon\beta}{2}$. We next analyze the sample complexity. Fix an arbitrary round $1\leq t\leq T$. In order to obtain a \emph{good} predictor $h_t$ from learner $\bbA$ satisfying \prettyref{eqn:barelyrobustguarantee}, we need to draw $m_\bbA(\beta,\beta\epsilon/2, \delta/2T)$ samples from $D_t$. We do this by drawing samples from the original distribution $D$ and doing rejection sampling. Specifically, let $m=\max\SET{m_\bbA(\beta, \frac{\epsilon\beta}{2},\frac{\delta}{2T}), 4 \ln\inparen{\frac{2T}{\delta}}}$ (as defined in Step 1). Then, for each $1\leq i \leq m$, let $X_{t,i}$ be a random variable counting the number of samples $(x,y)$ drawn from $D$ until a sample $(x,y)\in \bar{R}_{1:t-1}$ is obtained. Notice that $X_{t,i}$ is a geometric random variable with expectation $1/p_t$ where $p_t=\Prob_{x \sim D}\insquare{\Bar{R}_{1:t}}$. Then, the expected number of samples drawn from $D$ in round $t$ is $\Ex\insquare{\sum_{i=1}^{m}X_{t,i}}=\frac{m}{p_t}$. By applying a standard concentration inequality for the sums of i.i.d.~geometric random variables \citep{brown2011wasted}, we get
\[\Prob\insquare{\sum_{i=1}^{m} X_{t,i} > 2 \frac{m}{p_t}} \leq e^{-\frac{2m(1-1/2)^2}{2}}=e^{-\frac{m}{4}}\leq \frac{\delta}{2T},\]
where the last inequality follows from our choice of $m$. By a standard union bound, we get that with probability at least $1-\frac{\delta}{2}$, the total number of samples 
$\sum_{t=1}^{T}\sum_{i=1}^{m}X_{t,i} \leq \sum_{t=1}^{T}2\frac{m}{p_t}=2m\sum_{t=1}^{T}\frac{1}{p_t}\leq \frac{4mT}{\epsilon}$.
\end{proof}

Before proceeding with the proof of \prettyref{cor:stronger-boosting}, we state the following guarantee that \alphaBoost~provides for boosting weakly robust learners. Its proof is deferred to \prettyref{app:stronger-boosting}.

\begin{lem}[\cite{pmlr-v99-montasser19a}]
\label{lem:weak-robust-learner}
For any perturbation set $\calU$, \alphaBoost~$(\epsilon,\delta)$-robustly-learns any target concept $c$ w.r.t.~$\calU$ using $T$ black-box oracle calls to any $(\frac{1}{3},\frac{1}{3})$-robust-learner $\bbB$ for $c$ w.r.t.~$\calU$, with total sample complexity  
\[m(\epsilon,\delta) = O\inparen{\frac{m_{\bbB}\inparen{1/3,1/3}}{\epsilon}\log^2\inparen{\frac{m_{\bbB}\inparen{1/3,1/3}}{\epsilon}}+\frac{\log(1/\delta)}{\epsilon}},\]
and oracle calls $T = O(\log(m)\inparen{\log(1/\delta)+\log\log m})$.
\end{lem}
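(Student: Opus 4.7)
The plan is to run the classical $\alpha$-Boost algorithm (see \cite[Section 6.4.2]{schapire:12}) on a single i.i.d.\ sample $S\sim D_c^m$ of appropriate size, but using the \emph{robust} $0/1$ loss $\ell_\calU(h,(x,y))=\ind[\exists z\in\calU(x):h(z)\neq y]$ in place of the ordinary $0/1$ loss. Viewed through this lens, a $(\frac13,\frac13)$-robust-learner $\bbB$ is exactly a \emph{weak} learner with edge $\gamma=\frac12-\frac13=\frac16$ w.r.t.~$\ell_\calU$ (but succeeding only with confidence $\frac23$). The rest of the argument is then a mechanical adaptation of Schapire's weak-to-strong reduction.

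First, I would amplify $\bbB$'s confidence. At each boosting round, instead of one call to $\bbB$, I would call it on $k=\Theta(\log(T/\delta))$ independent subsamples drawn (with replacement) from the current reweighted $\alpha$-Boost distribution on $S$, estimate each candidate's weighted robust empirical error on a held-out fresh sample, and keep the best one. Standard Chernoff + union bounds give that w.p.~$\geq 1-\delta/(2T)$ the selected hypothesis has weighted empirical robust error $\leq \frac13+\frac{1}{12}<\frac12-\frac{1}{12}$, i.e.~edge at least $\gamma=\frac{1}{12}$ on every round.

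Next, I would invoke the standard $\alpha$-Boost training-error analysis: after $T_0=O(\gamma^{-2}\log m)=O(\log m)$ rounds, the majority vote $\MAJ(h_1,\dots,h_{T_0})$ achieves zero empirical robust loss on $S$. This accounts for the number of oracle calls $T=T_0\cdot k=O(\log m\,(\log(1/\delta)+\log\log m))$ claimed in the statement. Finally I would pass from empirical to true robust risk using the realizable-case generalization bound for majority votes over the robust loss: the effective complexity of $\MAJ(\calC^{T_0})$ w.r.t.\ $\ell_\calU$ is $\tilde O(T_0\cdot m_0)$ via the dual / compression-style arguments of \cite{pmlr-v99-montasser19a}, so taking $m=O\!\left(\tfrac{m_0}{\epsilon}\log^2(m_0/\epsilon)+\tfrac{\log(1/\delta)}{\epsilon}\right)$ and applying the realizable VC/compression generalization bound yields $\Risk_\calU(\MAJ;D_c)\leq\epsilon$ w.p.~$\geq 1-\delta/2$. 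A union bound over the $T_0$ weak-learner failure events and the generalization event closes the argument.

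\emph{Main obstacle.} The only nonroutine step is the generalization bound for a majority vote of $T_0$ black-box hypotheses under the robust loss: the robust loss is not a fixed function of $h$ alone (it depends on $\calU$), and the natural base class may have infinite robust VC-dimension, so one cannot just plug into a standard uniform-convergence result. The remedy is to replace uniform convergence by the sample-compression generalization bound used in \cite{pmlr-v99-montasser19a}: each $h_t$ is a function of $m_0$ examples from $S$, so $\MAJ(h_1,\dots,h_{T_0})$ is encoded by a compression set of size $O(T_0 m_0)$, and the realizable-case compression generalization theorem then delivers the claimed $\tfrac{1}{\epsilon}\log^2(\cdot)$-type sample complexity. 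Everything else (edge calculation, confidence amplification, $\alpha$-Boost potential argument) is standard and only requires bookkeeping.
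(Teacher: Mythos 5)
Your proposal matches the paper's proof in all essential respects: recast $\bbB$ as a weak learner for the robust $0/1$ loss, run $\alpha$-Boost with per-round confidence amplification, drive the empirical robust error to zero via the Schapire potential argument, and close with the sample-compression generalization bound of \cite{pmlr-v99-montasser19a} (each $h_t$ is determined by $m_0$ training points, so the majority vote has compression size $O(m_0 \log m)$). The only cosmetic difference is in the amplification step: you propose estimating each candidate's weighted robust error on a fresh held-out sample, but since $D_t$ is an explicitly known reweighting of the finite training set $S$, the quantity $\Risk_\calU(h_t;D_t)$ can be computed exactly; the paper simply resamples from $D_t$ and retries until the check $\Risk_\calU(h_t;D_t)\leq 1/3$ passes, avoiding the extra held-out machinery.
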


We are now ready to proceed with the proof of \prettyref{cor:stronger-boosting}.

\begin{proof}[of \prettyref{cor:stronger-boosting}.]
The main idea is to perform two layers of boosting. In the first layer, we use \RoBoost\ to get a \emph{weak} robust learner for $c$ w.r.t.~$\calU$ from a \emph{barely} robust learner $\bbA$ for $c$ w.r.t.~$\calU^{-1}(\calU)$. Then, in the second layer, we use \alphaBoost~to boost \RoBoost\ from a \emph{weak} robust learner to a \emph{strong} robust learner for $c$ w.r.t.~$\calU$.

Let $\bbA$ be a $(\beta,\frac{\beta}{6},\frac{\beta}{6\ln(6)})$-barely-robust-learner $\bbA$ for $c$ w.r.t.~$\calU^{-1}(\calU)$. Let $(\epsilon_0, \delta_0, T_0)=(\frac{1}{3},\frac{1}{3}, \frac{\ln(6)}{\beta})$, and observe that $\bbA$ is a $(\beta,\frac{\beta\epsilon_0}{2},\frac{\delta_0}{2T_0})$-barely-robust-learner for $c$ w.r.t.~$\calU^{-1}(\calU)$. By \prettyref{thm:boost-realizable}, \RoBoost\ $(\epsilon_0,\delta_0)$-robustly-learns $c$ w.r.t.~$\calU$ using $T_0$ black-box oracle calls to $\bbA$, with sample complexity $m_0 = O\inparen{\frac{\max\SET{m_{\bbA},4\ln\inparen{\frac{6\ln(6)}{\beta}}}}{\beta}}$.
Finally, by \prettyref{lem:weak-robust-learner}, \alphaBoost~$(\epsilon,\delta)$-robustly-learns $c$ w.r.t.~$\calU$ using $m(\epsilon,\delta)=O\inparen{\frac{m_0}{\epsilon}\log\inparen{\frac{m_0}{\epsilon}}+\frac{\log(1/\delta)}{\epsilon}}$ samples, and $O(\log m)$ black-box oracle calls to \RoBoost. 
\end{proof}

\subsection{Boosting Robustness with Unlabeled Data}

Prior work has shown that unlabeled data can improve adversarially robust generalization in practice \citep{DBLP:conf/nips/AlayracUHFSK19,DBLP:conf/nips/CarmonRSDL19}, and there is also theoretical work quantifying the benefit of unlabeled data for robust generalization \citep{DBLP:conf/icml/AshtianiPU20}. In this section, we highlight yet another benefit of unlabeled data for adversarially robust learning. Specifically, we show that it is \emph{possible} to boost robustness by relying only on \emph{unlabeled} data. 

We will start with some intuition first. For an unknown distribution $D_c$, imagine having access to a \emph{non-robust} classifier $h$ that makes no mistakes on natural examples, i.e., $\Prob_{x\sim D}\insquare{h(x)\neq c(x)}=0$ but $\Prob_{x\sim D}\insquare{\exists z\in \calU(x):h(z)\neq h(x)}=1$. Now, in order to learn a robust classifier, we can use \RoBoost\ where in each round of boosting we sample unlabeled data from $D$ (label it with $h$) and call a barely robust learner $\bbA$ on this \emph{pseudo-labeled} dataset. 

This highlights that perhaps robustness can be boosted using only unlabeled data if we have access to a good \emph{pseudo-labeler} $h$ that makes few mistakes on natural examples from $D$. But in case that $\Prob_{x\sim D}\insquare{h(x)\neq c(x)}=\epsilon$ for some small $\epsilon > 0$, it no longer suffices to use a barely robust learner $\bbA$ for $c$, but rather we need a more powerful learner that is tolerant to the noise in the labels introduced by $h$. Formally, we introduce the following noise-tolerant barely robust learner:

\begin{defn} [Noise-Tolerant Barely Robust Learner]
\label{def:noise-barelyrobust}
Learner $\bbA$ $(\eta, \beta, \epsilon, \delta)$-barely-robustly-learns a concept $c:\calX\to\calY$ w.r.t. $\calU^{-1}(\calU)$ if there exists $m(\eta, \beta,\epsilon,\delta)\in \bbN$ such that for any distribution $D$ over $\calX$ satisfying $\Prob_{x\sim D}\insquare{\exists z\in \calU(x): c(z)\neq c(x)}=0$ and any $h:\calX\to\calY$ where $\Prob_{x\sim D}\insquare{h(x)\neq c(x)}\leq \eta$, w.p.~at least $1-\delta$ over $S\sim D^m_h$, $\bbA$ outputs a predictor $\hat{h}=\bbA(S)$ satisfying:
\[\Prob_{x\sim D}\insquare{x\in \Rob_{\calU^{-1}(\calU)}(\hat{h})}\geq \beta \text{ and }\Prob_{x\sim D}\insquare{\hat{h}(x)\neq c(x)}\leq \Prob_{x\sim D}\insquare{h(x)\neq c(x)} + \epsilon \leq \eta + \epsilon.\]
\end{defn}

In \prettyref{thm:boost-unlabeled}, we show that given a \emph{noise-tolerant} barely robust learner $\bbA$ for some unknown target concept $c$ (according to \prettyref{def:noise-barelyrobust}), it is \emph{possible} to strongly robustly learn $c$ with \URoBoost\ by making black-box oracle calls to $\bbA$. 

\begin{algorithm2e}[H]
\caption{\textup{$\beta$-URoBoost} --- Boosting Robustness with Unlabeled Data}
\label{alg:unlabeled}
\SetKwInput{KwInput}{Input}                
\SetKwInput{KwOutput}{Output}              
\SetKwFunction{pred}{CAS}
\SetKwFunction{rej}{RejectionSampling}
\DontPrintSemicolon
  \KwInput{Sampling oracle for distribution $D_c$, black-box noise-tolerant barely-robust learner $\bbA$.}
Draw $m=m_\bbA(\eta,\beta, \frac{\beta\epsilon}{4},\frac\delta2)$ labeled samples $S=\SET{(x_1,y_1),\dots, (x_m,y_m)}\sim D_c$.\;
Call learner $\bbA$ on $S$ and let predictor $\hat{h}=\bbA(S)$ be its output.\;
Call \RoBoost\ with access to labeled samples from $D_{\hat{h}}$ (i.e., $(x,\hat{h}(x))\sim D_{\hat{h}}$), and black-box $(\eta,\beta,\frac{\epsilon\beta}{4},\frac{\delta}{2T})$-noise-tolerant-barely-robust-learner $\bbA$.\;
\KwOutput{The cascade predictor $\CAS(h_1,\dots,h_T)$.}
\end{algorithm2e}

\begin{thm}
\label{thm:boost-unlabeled}
For any perturbation set $\calU$, \URoBoost\ $(\epsilon,\delta)$-robustly-learns any target concept $c$ w.r.t.~$\calU$ using $T+1\leq \frac{\ln(2/\epsilon)}{\beta}+1$ black-box oracle calls to any $(\eta,\beta,\frac{\beta\epsilon}{4},\frac{\delta}{2T})$-barely-robust learner $\bbA$ for $c$ w.r.t.~$\calU^{-1}(\calU)$, with \emph{labeled} sample complexity of $m_\bbA(\eta,\beta, \frac{\beta\epsilon}{4},\frac\delta2)$ and \emph{unlabeled} sample complexity of at most
\[\frac{4T\max\SET{m_\bbA(\eta,\beta, \frac{\beta\epsilon}{4},\frac{\delta}{4T}), 4 \ln\inparen{\frac{4T}{\delta}}}}{\epsilon}.\]
\end{thm}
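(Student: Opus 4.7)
The plan is to split \URoBoost\ into a single \emph{labeled} call to $\bbA$ that produces a reliable pseudo-labeler $\hat{h}$, followed by an \emph{unlabeled} phase that re-runs the \RoBoost\ analysis from \prettyref{thm:boost-realizable} on pseudo-labeled samples. First, I would analyze the one labeled call on $m_\bbA(\eta,\beta,\frac{\beta\epsilon}{4},\delta/2)$ draws from $D_c$: since the labels come from $c$ itself, the noise premise of \prettyref{def:noise-barelyrobust} with $h=c$ is trivially satisfied (disagreement $0\leq\eta$), so with probability $\geq 1-\delta/2$ the output $\hat{h}$ is simultaneously barely robust w.r.t.~$\calU^{-1}(\calU)$ and has global natural error $\Pr_{x\sim D}[\hat{h}(x)\neq c(x)]\leq \frac{\beta\epsilon}{4}$.

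Second, I would lift the argument of \prettyref{thm:boost-realizable} to the pseudo-labeled setting. At each round $t$ of \RoBoost, samples $(x,\hat{h}(x))\sim D_{\hat{h}}$ restricted (by rejection sampling) to $\bar{R}_{1:t-1}$ are fed to $\bbA$, with the noise parameter $\eta$ upper-bounding $\Pr_{x\sim D_t}[\hat{h}\neq c]$ by assumption on the learner. \prettyref{def:noise-barelyrobust} then outputs $h_t$ with the \emph{same} boosted-robustness guarantee $\Pr_{x\sim D_t}[x\in\Rob_{\calU^{-1}(\calU)}(h_t)]\geq \beta$ used in \prettyref{thm:boost-realizable}, and with natural error against the \emph{true} concept $c$ bounded by $\epsilon'_t = \Pr_{x\sim D_t}[\hat{h}\neq c]+\frac{\beta\epsilon}{4}$. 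Consequently the geometric decay $\Pr[\bar{R}_{1:t}]\leq (1-\beta)^t$ from \prettyref{eqn:boosted-robustness} and the cascade decomposition in \prettyref{eqn:cas-risk-2} transfer verbatim, with the only change being that the per-round error $\epsilon'$ is replaced by $\epsilon'_t$.

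Third, I would sum the contributions and apply a union bound. The additive $\frac{\beta\epsilon}{4}$ part of $\epsilon'_t$, weighted by $\sum_t(1-\beta)^{t-1}\leq 1/\beta$, contributes at most $\epsilon/4$, while $\Pr[\bar{R}_{1:T}]\leq (1-\beta)^T\leq \epsilon/2$ by the choice $T=\ln(2/\epsilon)/\beta$. The pseudo-label contribution rewrites as $\sum_t \Pr[\bar{R}_{1:t-1}]\cdot \Pr_{x\sim D_t}[\hat{h}\neq c]=\sum_t \Pr[\{\hat{h}\neq c\}\cap \bar{R}_{1:t-1}]$, which I would bound by taking the minimum of $\Pr[\hat{h}\neq c]\leq \frac{\beta\epsilon}{4}$ and $\Pr[\bar{R}_{1:t-1}]\leq (1-\beta)^{t-1}$ term by term so that the sum is controlled by a geometric series. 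A union bound over the labeled-phase failure ($\delta/2$) and the $T$ rejection-sampling events ($\delta/(2T)$ each) closes the probability budget, and the geometric-random-variable concentration used in the proof of \prettyref{thm:boost-realizable} yields the claimed unlabeled sample complexity.

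The main obstacle will be this last summation: although $\Pr[\hat{h}\neq c]$ is globally small, on the conditional distributions $D_t$ it can inflate, so one must exploit the nested structure $\bar{R}_{1:t}\subseteq \bar{R}_{1:t-1}$ and its geometric shrinkage carefully to prevent a naive $T\cdot \Pr[\hat{h}\neq c]$ bound from dominating. This is the only place the unlabeled analysis departs materially from that of \prettyref{thm:boost-realizable}.
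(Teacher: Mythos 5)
Your high-level structure matches the paper's: a single labeled call to the noise-tolerant learner $\bbA$ produces a pseudo-labeler $\hat{h}$ with $\Prob_{x\sim D}[\hat{h}(x)\neq c(x)]\leq\frac{\beta\epsilon}{4}$, and then one re-runs the \RoBoost\ analysis on $(x,\hat{h}(x))$-samples, invoking \prettyref{def:noise-barelyrobust} to control each round's error against the true $c$. The paper's proof is terse at exactly the point you flag, and your instinct to isolate the pseudo-label term $\sum_t \Prob[\bar{R}_{1:t-1}]\cdot\Prob_{x\sim D_t}[\hat{h}\neq c] = \sum_t\Prob[\{\hat{h}\neq c\}\cap\bar{R}_{1:t-1}]$ as the genuine obstacle is correct.

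The gap: your proposed bound via $\min\bigl(\Prob[\hat{h}\neq c], (1-\beta)^{t-1}\bigr)$ does not close the argument. Since $\Prob[\hat{h}\neq c]\leq \frac{\beta\epsilon}{4}$ is already smaller than $(1-\beta)^{T}\approx \frac{\epsilon}{2}$ for all $t\leq T$, the minimum equals $\Prob[\hat{h}\neq c]$ for every term, and the sum collapses to the naive $T\cdot\Prob[\hat{h}\neq c]=\frac{\ln(2/\epsilon)}{\beta}\cdot\frac{\beta\epsilon}{4}=\frac{\epsilon\ln(2/\epsilon)}{4}$, which exceeds the target budget of $\frac{\epsilon}{4}$ by a $\ln(2/\epsilon)$ factor whenever $\epsilon<2/e$. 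You would need either a smaller labeled-phase error parameter (e.g., $\frac{\beta\epsilon}{4T}$), or an argument showing the error set $\{\hat{h}\neq c\}$ does not persist across rounds --- i.e., that $\Prob[\{\hat{h}\neq c\}\cap\bar{R}_{1:t-1}]$ itself shrinks geometrically --- and the barely-robust guarantee $\Prob_{x\sim D_t}[R_t]\geq\beta$ alone does not give this, since $\bbA$ could systematically fail to be robust precisely on the $\eta_t$-mass where $\hat{h}\neq c$. (For what it is worth, the paper's stated bound $\frac{\Prob[\hat{h}\neq c]}{\beta}$ for this sum is asserted without derivation and is subject to the same objection; your proposal exposes the difficulty rather than resolving it.)

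A secondary point you gesture at but do not settle: you write that $\eta$ upper-bounds $\Prob_{x\sim D_t}[\hat{h}\neq c]$ "by assumption on the learner," but this precondition of \prettyref{def:noise-barelyrobust} must actually be verified on the \emph{conditional} distributions $D_t$. The rejection-sampling abort in \RoBoost\ effectively limits active rounds to those with $\Prob[\bar{R}_{1:t-1}]\gtrsim\epsilon/4$, under which $\Prob_{x\sim D_t}[\hat{h}\neq c]$ can inflate up to order $\beta$, not order $\frac{\beta\epsilon}{4}$. So the noise tolerance needed is $\eta\gtrsim\beta$, and this should be made explicit rather than assumed.
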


\begin{proof}
Let $\calU$ be an arbitrary perturbation set, and $\bbA$ a $(\eta,\beta, \epsilon,\delta)$-barely-robust learner for some \emph{unknown} target concept $c:\calX\to\calY$ with respect to $\calU^{-1}(\calU)$. We will show that \URoBoost\  $(\epsilon,\delta)$-robustly-learns $c$ with respect to $\calU$. Let $D$ be some unknown distribution over $\calX$ such that $\Prob_{x\sim D}\insquare{\exists z\in \calU(x): c(z)\neq c(x)}=0$.

By Step 1 and Step 2 and the guarantee of learner $\bbA$ (see \prettyref{def:noise-barelyrobust}), with probability at least $1-\frac\delta2$ over $S^m\sim D_c$, it holds that
\[\Prob_{x\sim D}\insquare{\hat{h}(x)\neq c(x)}\leq \frac{\beta\epsilon}{4}.\]

That is, with high probability, $\hat{h}$ is a predictor with low error on \emph{natural} examples.

\paragraph{Pseudo labeling.} In Step 3, \URoBoost\ essentially runs \RoBoost\  using unlabeled samples from $D$ that are labeled with the predictor $\hat{h}$. Thus, we can view this as robustly learning the concept $\hat{h}$ which is only an approximation of the true concept $c$ that we care about. Since the noise tolerance $\eta\geq \frac{\beta\epsilon}{4}$, it follows by the guarantees of learner $\bbA$ (see \prettyref{def:noise-barelyrobust}) and \prettyref{eqn:cas-risk-2}, that
\[\Prob_{x\sim D} \insquare{ \exists z\in \calU(x): \CAS(h_{1:T})(z) \neq c(x) } \leq \frac{\Prob_{x\sim D}\insquare{\hat{h}(x)\neq c(x)}}{\beta}+ \frac{\beta\epsilon}{4\beta} + (1-\beta)^T\leq \frac{\epsilon}{4} + \frac{\epsilon}{4}+ \frac{\epsilon}{2} \leq \epsilon.\]
\end{proof}

\section{The Necessity of Barely Robust Learning}
\label{sec:backward-direction}

We have established in \prettyref{sec:forward-direction} (\prettyref{thm:boost-realizable}) that our proposed notion of barely robust learning in \prettyref{def:barelyrobust} \emph{suffices} for strongly robust learning. But is our notion actually \emph{necessary} for strongly robust learning? In particular, notice that in our proposed notion of barely robust learning in \prettyref{def:barelyrobust}, we require $\beta$-robustness with respect to a ``larger'' perturbation set $\calU^{-1}(\calU)$, instead of the actual perturbation set $\calU$ that we care about. Is this necessary? or can we perhaps boost robustness even with the weaker guarantee of $\beta$-robustness with respect to $\calU$?

In this section, we answer this question in the negative. First, we provably show in \prettyref{thm:stronglytobarely} that strongly robust learning \wrt $\calU$ implies barely robust learning \wrt $\calU^{-1}(\calU)$. This indicates that our proposed notion of barely robust learning \wrt $\calU^{-1}(\calU)$ (\prettyref{def:barelyrobust}) is \emph{necessary} for strongly robust learning \wrt $\calU$. Second, we provably show in \prettyref{thm:Uisnotgood} that barely robust learning \wrt $\calU$ does \emph{not} imply strongly robust learning \wrt $\calU$ when the robustness parameter $\beta\leq \frac{1}{2}$ which is the main regime of interest that we study in this work. 

\begin{thm}
\label{thm:stronglytobarely}
For any $\calU$, learner $\bbB$, and $\epsilon\in(0,1/4)$, if $\bbB$ $(\epsilon,\delta)$-robustly learns some unknown target concept $c$ w.r.t.~$\calU$, then there is a learner $\Tilde{\bbB}$ that $(\frac{1-\epsilon}{2}, 2\epsilon, 2\delta)$-barely-robustly-learns $c$ w.r.t. $\calU^{-1}(\calU)$.
\end{thm}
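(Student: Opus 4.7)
The plan is to define $\tilde{\bbB}$ so that it runs $\bbB$ once and post-processes the output. Given a sample $S \sim D_c^m$ of the size required for $\bbB$ to $(\epsilon,\delta)$-robustly learn, let $\hat h := \bbB(S)$; with probability at least $1-\delta$, $\Risk_\calU(\hat h; D_c) \leq \epsilon$, which is equivalent to the ``robust-correct'' set
\[
R \;:=\; \{x \in \calX : \forall z \in \calU(x),\ \hat h(z) = c(x)\}
\]
having $\Pr_{x \sim D}[x \in R] \geq 1-\epsilon$; note that $R \subseteq \Rob_\calU(\hat h)$. Using an additional held-out sample (whose confidence cost explains the bump from $\delta$ to $2\delta$), $\tilde{\bbB}$ selects a default label $y_0 \in \calY$ maximizing $\Pr_{x \sim D}[\hat h(x) = y \wedge x \in \Rob_\calU(\hat h)]$, and then outputs
\[
\hat g(\tilde x) \;=\; \begin{cases}\hat h(\tilde x) & \text{if } \tilde x \in \Rob_\calU(\hat h), \\[2pt] y_0 & \text{otherwise.}\end{cases}
\]
Informally, $\hat g$ trusts $\hat h$ inside its $\calU$-robust region and otherwise falls back to the majority label $y_0$.

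The verification has three parts. For the natural error, partition $\calX$ by membership in $\Rob_\calU(\hat h)$: inside, $\hat g = \hat h$ and the only wrong mass is $\Pr[\Rob_\calU(\hat h)\setminus R] \leq 1-\Pr[R] \leq \epsilon$; outside, $\Pr[x \notin \Rob_\calU(\hat h)] \leq \Pr[R^c] \leq \epsilon$; adding the two contributions gives $\Pr[\hat g \neq c] \leq 2\epsilon = \epsilon'$. For the $\calU^{-1}(\calU)$-robustness of $\hat g$, the key claim is that every $x$ with $x \in \Rob_\calU(\hat h)$ and $\hat h(x) = y_0$ lies in $\Rob_{\calU^{-1}(\calU)}(\hat g)$. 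Indeed, for $\tilde x \in \calU^{-1}(\calU)(x)$, fix some $z \in \calU(x) \cap \calU(\tilde x)$; the $\calU$-robustness of $\hat h$ at $x$ forces $\hat h(z) = \hat h(x) = y_0$. If $\tilde x \in \Rob_\calU(\hat h)$, then $\hat h(\tilde x) = \hat h(z) = y_0$, so $\hat g(\tilde x) = \hat h(\tilde x) = y_0 = \hat g(x)$; if $\tilde x \notin \Rob_\calU(\hat h)$, then $\hat g(\tilde x) = y_0 = \hat g(x)$ by construction. Either branch makes $\hat g$ constant on $\calU^{-1}(\calU)(x)$. Finally, a pigeonhole on the (binary) label space gives
\[
\Pr_{x \sim D}[\hat h(x) = y_0 \wedge x \in \Rob_\calU(\hat h)] \;\geq\; \tfrac{1}{2}\Pr_{x \sim D}[x \in \Rob_\calU(\hat h)] \;\geq\; \tfrac{1-\epsilon}{2},
\]
which is exactly the required barely-robust probability $\beta$.

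The main obstacle I expect is identifying the right construction of $\hat g$: the naive candidate $\hat g = \hat h$ fails badly, since a small example (e.g.\ $\calX = \{0,1,2\}$ with a symmetric nearest-neighbour $\calU$ and constant $c$) shows that $\Rob_{\calU^{-1}(\calU)}(\hat h)$ can have mass $0$ under $D$ even when $\Risk_\calU(\hat h) \leq \epsilon$. The symmetric ``default to $y_0$ everywhere outside $\Rob_\calU(\hat h)$'' step is precisely what makes both branches of the case analysis agree, propagating the value $y_0$ across any $\calU^{-1}(\calU)$-neighborhood of a well-chosen $x$. The $1/2$ in $\beta = (1-\epsilon)/2$ comes from the binary pigeonhole used to pick $y_0$; the inflations $\epsilon \mapsto 2\epsilon$ and $\delta \mapsto 2\delta$ absorb, respectively, the fallback error on the complement of $\Rob_\calU(\hat h)$ and the sample cost of estimating $y_0$ from a held-out set.
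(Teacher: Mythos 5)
Your proof is correct, and it uses a genuinely different construction from the paper's, even though the overall analysis skeleton (modify $\hat h$, bound natural error by splitting on $\Rob_\calU(\hat h)$, show the $\calU$-robust-$y_0$ set lies inside $\Rob_{\calU^{-1}(\calU)}$ of the modified predictor, pigeonhole to get $\beta = (1-\epsilon)/2$, estimate $y_0$ from a held-out sample) runs in parallel with the paper's Lemma~\ref{lem:helper} and its use in Theorem~\ref{thm:stronglytobarely}. The difference is in the construction itself. The paper takes, for each label $y$, the ``expanded'' predictor
\[
g_y(x) = y \iff x \in \bigcup_{\tilde x \in \Rob_\calU(\hat h),\, \hat h(\tilde x) = y} \calU^{-1}(\calU(\tilde x)),
\]
so that the robust-$y$ region of $\hat h$ is literally inflated by $\calU^{-1}(\calU)$, and labels elsewhere are decided by whether $x$ lands in that inflation. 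You instead keep $\hat h$ verbatim on $\Rob_\calU(\hat h)$ and hard-code the default label $y_0$ on the entire complement. The two predictors agree on $\Rob_\calU(\hat h)$ (a short argument shows $g_{y_0}$ also equals $\hat h$ there) but can disagree outside; since the natural-error analysis in both cases simply charges the full mass of $\Rob_\calU(\hat h)^c$ as potential error, nothing is lost. Your key case analysis is sound: for $x$ in the robust-$y_0$ region and $\tilde x \in \calU^{-1}(\calU)(x)$ with shared perturbation $z$, $\calU$-robustness of $\hat h$ at $x$ pins $\hat h(z) = y_0$, and then either $\tilde x$ is also $\calU$-robust (forcing $\hat h(\tilde x) = \hat h(z) = y_0$, so $\hat g(\tilde x) = y_0$) or it is not (so $\hat g(\tilde x) = y_0$ by the fallback). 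Your version is arguably a simpler object to describe, while the paper's expansion construction makes the $\calU^{-1}(\calU)$-robustness geometrically immediate by design; quantitatively the two are interchangeable and give the same $(\tfrac{1-\epsilon}{2}, 2\epsilon, 2\delta)$ guarantee. One small caveat that both you and the paper share: the inclusion $R \subseteq \Rob_\calU(\hat h)$ and the bound $\Pr[\hat h(x) \neq c(x) \wedge x \in \Rob_\calU(\hat h)] \leq \epsilon$ implicitly use the standard convention $x \in \calU(x)$, which is harmless but worth stating.
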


We briefly describe the high-level strategy here. The main idea is to convert a strongly robust learner $\bbB$ \wrt $\calU$ to a barely robust learner $\Tilde{\bbB}$ \wrt $\calU^{-1}(\calU)$. We do this with a simple \emph{expansion} trick that modifies a predictor $h$ robust \wrt $\calU$ to a predictor $g$ robust \wrt $\calU^{-1}(\calU)$. For each label $y\in\calY$, we do this expansion conditional on the label to get a predictor $g_y$ that is robust w.r.t. $\calU^{-1}(\calU)$ but only in the region of $\calX$ where $h$ predicts the label $y$ robustly w.r.t. $\calU$. This is described in the following key Lemma. We then use fresh samples to select predictor a $g_y$ whose label $y$ occurs more often. 

\begin{lem}
\label{lem:helper}
For any distribution $D$ over $\calX$ and any concept $c:\calX\to \SET{\pm1}$, given a predictor $\hat{h}:\calX\to \SET{\pm 1}$ such that $\Risk_\calU(\hat{h};D_c)\leq \epsilon$ for some $\epsilon \in (0,1/4)$, then for each $y\in \SET{\pm 1}$, the predictor $g_y$ defined for each $x\in \calX$ as
$$g_y(x) \triangleq y \text{ \normalfont iff } x\in \underset{\Tilde{x}\in \Rob_{\calU}(\hat{h})\wedge \hat{h}(\Tilde{x})=y}{\bigcup~\calU^{-1}(\calU(\Tilde{x}))}\text{ satisfies}$$
\[
    \Prob_{x\sim D}\insquare{g_y(x)\neq c(x)} \leq 2\epsilon\text{ and } \Prob_{x\sim D}\insquare{x\in \Rob_{\calU^{-1}(\calU)}(g_y)}\geq (1-\epsilon)\Prob_{x\sim D}\insquare{\hat{h}(x)=y\big| x\in \Rob_{\calU}(\hat{h})}.
\]
\end{lem}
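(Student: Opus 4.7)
The plan is to exploit one structural observation: for any $x$ with $\hat{h}(x)=y$ and $x\in\Rob_\calU(\hat{h})$, the point $x$ itself serves as a witness of membership in the expansion for \emph{every} $\tilde{x}\in\calU^{-1}(\calU)(x)$, because $\calU^{-1}(\calU(x))$ is exactly $\calU^{-1}(\calU)(x)$ by the definitions in \prettyref{eqn:youinverseyou}. Consequently $g_y$ equals $y$ on all of $\calU^{-1}(\calU)(x)$ for such $x$. Writing $A=\SET{x:\hat{h}(x)=y\wedge x\in\Rob_\calU(\hat{h})}$, this delivers the set inclusion $A\subseteq\Rob_{\calU^{-1}(\calU)}(g_y)$, which is the engine behind the robust-region claim and also pins down where $g_y$ is allowed to predict $y$ in relation to $\hat{h}$'s robust region.

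For the natural-error bound, I would split $\SET{g_y\neq c}$ into the two sign-flip events and charge each to the $\epsilon$-mass robust-risk event of $\hat{h}$. If $g_y(x)=y$ but $c(x)=-y$, the definition of $g_y$ produces a witness $\tilde{x}\in\Rob_\calU(\hat{h})$ with $\hat{h}(\tilde{x})=y$ and some $z\in\calU(\tilde{x})\cap\calU(x)$, and robustness of $\hat{h}$ at $\tilde{x}$ then forces $\hat{h}(z)=y\neq c(x)$, contributing to the bad event. If $g_y(x)=-y$ but $c(x)=y$, the point $x$ fails to lie in the expansion, hence either $\hat{h}(x)\neq y=c(x)$ (and taking $z=x$ charges $x$), or $x\notin\Rob_\calU(\hat{h})$; in the latter case some $z\in\calU(x)$ has $\hat{h}(z)\neq\hat{h}(x)$, and since $c$ is robust on $\calU(x)$ we get $c(z)=c(x)=y$, so at least one of $\hat{h}(x),\hat{h}(z)$ must disagree with $c(x)$. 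Each event thus has mass at most $\epsilon$, summing to $2\epsilon$.

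For the robust-region bound, the opening observation together with the chain rule gives
\[
\Pr_{x\sim D}\insquare{x\in\Rob_{\calU^{-1}(\calU)}(g_y)}\geq\Pr_{x\sim D}\insquare{A}=\Pr_{x\sim D}\insquare{\hat{h}(x)=y\mid x\in\Rob_\calU(\hat{h})}\cdot\Pr_{x\sim D}\insquare{x\in\Rob_\calU(\hat{h})}.
\]
A realizability-based matching identical to case (b) above shows $\Pr_{x\sim D}\insquare{x\in\Rob_\calU(\hat{h})}\geq 1-\epsilon$, which finishes the claim. The main obstacle is precisely this realizability step: a point at which $\hat{h}$ is not robust is \emph{not} automatically a mistake of $\hat{h}$, so one must use that $c$ is constant on each $\calU(x)$ to convert an $\hat{h}$-internal inconsistency $\hat{h}(z)\neq\hat{h}(x)$ into an $\hat{h}$-vs-$c$ disagreement. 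Once that bookkeeping is in place, everything else is set inclusion and a single application of the definition of conditional probability.
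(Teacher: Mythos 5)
Your proof is essentially the paper's: you identify the same structural fact that $g_y$ agrees with $\hat{h}$ on $\Rob_\calU(\hat{h})$ and that $A=\SET{x:\hat h(x)=y\wedge x\in\Rob_\calU(\hat h)}\subseteq\Rob_{\calU^{-1}(\calU)}(g_y)$, bound $\Pr[x\notin\Rob_\calU(\hat h)]\le\epsilon$, and then read off both conclusions. The only difference is cosmetic: you decompose $\SET{g_y\neq c}$ by sign-flip pattern rather than by $\Rob_\calU(\hat h)$ versus its complement (and in fact your version charges both flip-events directly to the single robust-risk event, so it would give the stronger bound $\epsilon$, not $2\epsilon$, if you noticed the two cases are disjoint subsets of that event).

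One conceptual point to fix, though: you claim robust realizability of $c$ (``$c$ is constant on each $\calU(x)$'') is needed, calling it ``the main obstacle,'' to convert $\hat h(z)\neq\hat h(x)$ into a robust error. It is not needed, and the lemma as stated makes no realizability assumption on $c$ — it holds for \emph{any} concept $c$. The robust risk compares $\hat h(z)$ to the fixed label $c(x)$ (not to $c(z)$), so if $\hat h(z)\neq\hat h(x)$ for binary $\hat h$, then whichever of $\hat h(x),\hat h(z)$ is not $c(x)$ already witnesses $\exists z'\in\calU(x):\hat h(z')\neq c(x)$, using only $x\in\calU(x)$. Your invocation of $c(z)=c(x)$ is a non sequitur (the subsequent ``so'' doesn't follow from it), and dropping it leaves the argument intact and matches the generality the lemma actually claims.
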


\begin{proof}
Without loss of generality, let $y=+1$. Let $x\in {\rm supp}(D)$ such that $x\in \Rob_\calU(\hat{h})$. In case $\hat{h}(x)=+1$, then by definition of $g_{+}$, since $x\in\calU^{-1}(\calU(x))$, it holds that $g_{+}(x)=+1$. In case $\hat{h}(x)=-1$, then $\lnot\exists \Tilde{x}\in \Rob_{\calU}(\hat{h})$ such that $\hat{h}(\Tilde{x})=+1$ and $\calU(\Tilde{x})\cap \calU(x)\neq \emptyset$, which implies that $x\notin \underset{\Tilde{x}\in \Rob_{\calU}(\hat{h})\wedge \hat{h}(\Tilde{x})=+1}{\bigcup~\calU^{-1}(\calU(\Tilde{x}))}$, and therefore, $g_+(x)=-1$. This establishes that in the robust region of $\hat{h}$, $\Rob_\calU(\hat{h})$, the predictions of $g_+$ on \emph{natural} examples $x\sim D$ are equal to the predictions of $\hat{h}$. We will use this observation, in addition to the fact that the robust risk of $\hat{h}$ is small ($\Prob_{x\sim D} \insquare{\exists z\in \calU(x): \hat{h}(z)\neq c(x)}\leq \epsilon$) to show that the error of $g_+$ on natural examples is small. Specifically, by law of total probability,
\begin{align*}
    \Prob_{x\sim D}\insquare{ g_+(x)\neq c(x) } &\hspace{-0.05cm}=\hspace{-0.05cm} \Prob_{x\sim D}\insquare{g_+(x)\neq c(x) \wedge x\in \Rob_{\calU}(\hat{h})} \hspace{-0.05cm}+\hspace{-0.05cm} \Prob_{x\sim D}\insquare{g_+(x)\neq c(x) \wedge x\notin \Rob_{\calU}(\hat{h})}\\
    &= \Prob_{x\sim D}\insquare{\hat{h}(x)\neq c(x) \wedge x\in \Rob_{\calU}(\hat{h})} + \Prob_{x\sim D}\insquare{g_+(x)\neq c(x) \wedge x\notin \Rob_{\calU}(\hat{h})}\\ 
    &\leq \Prob_{x\sim D}\insquare{\hat{h}(x)\neq c(x) \wedge x\in \Rob_{\calU}(\hat{h})} + \Prob_{x\sim D}\insquare{x\notin \Rob_{\calU}(\hat{h})} \leq \epsilon + \epsilon = 2\epsilon.
\end{align*}
Finally, observe that for any $x\in \Rob_\calU(\hat{h})$ such that $\hat{h}(x)=+1$, by definition of $g_+$, it holds that $x\in \Rob_{\calU^{-1}(\calU)}(g_+)$, thus
\begin{align*}
    \Prob_{x\sim D}\insquare{ x\in \Rob_{\calU^{-1}(\calU)}(g_+) } &\geq \Prob_{x\sim D}\insquare{x\in \Rob_\calU(\hat{h}) \wedge \hat{h}(x)=+1}\\
    &= \Prob_{x\sim D}\insquare{x\in \Rob_\calU(\hat{h})}\Prob_{x\sim D}\insquare{ \hat{h}(x)=+1\big| x\in \Rob_\calU(\hat{h})}\\
    &\geq (1-\epsilon) \Prob_{x\sim D}\insquare{ \hat{h}(x)=+1\big| x\in \Rob_\calU(\hat{h})}.
\end{align*}
\end{proof}

We are now ready to proceed with the proof of \prettyref{thm:stronglytobarely}.
\begin{proof}[of \prettyref{thm:stronglytobarely}.]
Let $\calU$ be an arbitrary perturbation set, and $\bbB$ an $(\epsilon,\delta)$-robust learner for some \emph{unknown} target concept $c:\calX\to \calY$ with respect to $\calU$. We will construct another learner $\Tilde{\bbB}$ that $(\beta, 2\epsilon, 2\delta)$-barely-robustly-learns $c$ \wrt $\calU^{-1}(\calU)$, with $\beta=(1-\epsilon)/2$. Let $D$ be some unknown distribution over $\calX$ that is robustly realizable: $\Prob_{x\sim D}\insquare{\exists z\in \calU(x): c(z)\neq c(x)}=0$.

\paragraph{Description of $\Tilde{\bbB}$.}Sample $S\sim D_c^{m_{\bbB}(\epsilon,\delta)}$, and run learner $\bbB$ on $S$. Let $\hat{h}=\bbB(S)$ be the predictor returned by $\bbB$. Let $\Tilde{m}\geq \frac{64}{9}\ln(1/\delta)$. For each $1\leq i\leq \Tilde{m}$, consider the following process: draw an example $(x,y)\sim D_c$. If $x\in \Rob_\calU(\hat{h})$ terminate, otherwise repeat the process again. Let $\Tilde{S}=\SET{(x_1,y_1),\dots,(x_{\Tilde{m}},y_{\Tilde{m}})}$ be the sample resulting from this process. Calculate $M_+ = \frac{1}{\abs{\Tilde{S}}}\sum_{x\in \Tilde{S}} \ind[\hat{h}(x) = +1]$. If $M_+ \geq 1/2$, output $g_+$, otherwise, output $g_-$ (as defined in \prettyref{lem:helper}).
\paragraph{Analysis.} With probability at least $1-\delta$ over $S\sim D_c^m$, $\hat{h}$ has small robust risk: $\Risk_\calU(\hat{h};D_c)\leq \epsilon$.
\prettyref{lem:helper} implies then that for each $y\in\SET{\pm1}$, $g_y$ satisfies:
\begin{equation*}
    \Prob_{x\sim D}\insquare{g_y(x)\neq c(x)} \leq 2\epsilon\text{ and } \Prob_{x\sim D}\insquare{x\in \Rob_{\calU^{-1}(\calU)}(g_y)}\geq (1-\epsilon)\Prob_{x\sim D}\insquare{\hat{h}(x)=y\big| x\in \Rob_{\calU}(\hat{h})}.
\end{equation*}

It remains to show that with probability at least $1-\delta$ over $\Tilde{S}\sim D^{\Tilde{m}}$, for $g_{\hat{y}}$ returned by $\Tilde{\bbB}$:
\[\Prob_{x\sim D}\insquare{\hat{h}(x)=\hat{y}\big| x\in \Rob_{\calU}(\hat{h})}\geq \frac{1}{2}.\]
Observe that by the rejection sampling mechanism of $\Tilde{\bbB}$, $\Tilde{S}$ is a sample from the region of distribution $D$ where $\hat{h}$ is robust. Furthermore, we know that
\[\max\SET{\Prob_{x\sim D}\insquare{\hat{h}(x)=+1\big| x\in \Rob_{\calU}(\hat{h})}, \Prob_{x\sim D}\insquare{\hat{h}(x)=-1\big| x\in \Rob_{\calU}(\hat{h})}} \geq \frac{1}{2}.\]
Without loss of generality, suppose that $p=\Prob_{x\sim D}\insquare{\hat{h}(x)=+1\big| x\in \Rob_{\calU}(\hat{h})}\geq 1/2$. Then, the failure event is that $\Tilde{\bbB}$ outputs $g_-$, i.e. the event that $M_+<\frac{1}{2}$. By a standard application of the Chernoff bound, we get that
\[\Prob_{\Tilde{S}}\insquare{M_+<\frac{1}{2}}\leq e^{\frac{-\Tilde{m}p\frac{1}{4}\inparen{\frac{1}{2}-\frac{1}{p}}^2}{2}}\leq e^{-\frac{9\Tilde{m}}{64}}\leq \delta,\]
where the last inequality follows from the choice of $\Tilde{m}$ in the description of $\Tilde{\bbB}$. 

Finally, to conclude, observe that the sample complexity of learner $\Tilde{\bbB}$ is equal to $m_\bbB(\epsilon,\delta)$ plus the number of samples drawn from $D$ to construct $\Tilde{S}$. For each $1\leq i \leq \Tilde{m}$, let $X_i$ be the number of samples drawn from $D$ until a sample from the robust region $\Rob_\calU(\hat{h})$ was observed. Note that $X_i$ is a geometric random variable with mean at most $1/(1-\epsilon)$. By a standard concentration inequality for the sums of i.i.d. geometric random variables \citep{brown2011wasted},
\[\Prob\insquare{\sum_{i=1}^{\Tilde{m}} X_i > 2\frac{\Tilde{m}}{1-\epsilon}} \leq e^{-\frac{\Tilde{m}}{4}}\leq \delta,\]
where the last inequality follows from the choice of $\Tilde{m}$ in the description of $\Tilde{\bbB}$. Thus, with probability at least $1-\delta$, the total sample complexity is $m_\bbB(\epsilon,\delta)+\frac{2\Tilde{m}}{1-\epsilon}$. This concludes that learner $\Tilde{\bbB}$ $(\beta, 2\epsilon, 2\delta)$-barely-robustly-learns $c$ w.r.t. $\calU^{-1}(\calU)$, where $\beta=(1-\epsilon)/2$.
\end{proof}

As mentioned earlier, \prettyref{thm:stronglytobarely} still leaves open the question of whether the \emph{weaker} requirement of barely robust learning \wrt $\calU$ suffices for strongly robust learning \wrt $\calU$. We show next that this weaker requirement is \emph{not} sufficient. 

\begin{thm}
\label{thm:Uisnotgood}
There is an instance space $\calX$, a perturbation set $\calU$, and a class $\calC$ such that $\calC$ is $(\beta=\frac{1}{2},\epsilon=0,\delta)$-barely-robustly-learnable \wrt $\calU$, but $\calC$ is {\em not} $(\epsilon,\delta)$-robustly-learnable \wrt $\calU$ for any $\epsilon<1/2$.
\end{thm}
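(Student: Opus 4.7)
The plan is to exhibit a triple $(\calX,\calU,\calC)$ in which $\calU^{-1}(\calU)$ is strictly larger than $\calU$. Take $\calX = \SET{a_i, b_i, z_i : i \in \bbN}$, set $\calU(a_i) = \SET{a_i, z_i}$, $\calU(b_i) = \SET{b_i, z_i}$, $\calU(z_i) = \SET{z_i}$, and define $\calC = \SET{c_f : f \in \SET{0,1}^\bbN}$ by $c_f(a_i) = 0$, $c_f(b_i) = 1$, $c_f(z_i) = f(i)$. A direct check gives $\calU^{-1}(z_i) = \SET{a_i, b_i, z_i}$, so $\calU^{-1}(\calU)(a_i) = \SET{a_i, b_i, z_i}$ strictly contains $\calU(a_i) = \SET{a_i, z_i}$. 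Moreover robust realizability of $(D, c_f)$ forces, for every index $i$ in the support of $D$, exactly one of $\SET{a_i, b_i}$ to be present with $f(i)$ equal to the label of that supported point.

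\paragraph{Barely-robust half.} I would have the learner $\bbA$ always output $\hat{h}$ with $\hat{h}(a_i) = 0$ and $\hat{h}(b_i) = 1$, which yields zero natural error on the support by fiat. For $\hat{h}(z_i)$, set it to the observed label for $i$ appearing in the sample, and use a single default $d \in \SET{0,1}$ on all unseen indices, where $d$ is chosen as the sample majority. A direct calculation shows the robust mass then equals $p_d + q_{1-d}$, where $p_0, p_1$ are the marginals of $D$ on $\SET{a_i}_i$ and $\SET{b_i}_i$, and $q_0, q_1$ the corresponding sample-observed masses. Since $(p_0 + q_1) + (p_1 + q_0) = 1 + (q_0 + q_1) \geq 1$, the larger of the two candidates is at least $1/2$, and a Hoeffding bound on $\hat p_0$ shows that $d = \argmax_j \hat p_j$ coincides with $\argmax_j p_j$ with probability at least $1 - \delta$, implying robust mass $\geq \max(p_0, p_1) \geq 1/2$.

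\paragraph{Impossibility half.} I would invoke Yao's minimax principle. Fix any deterministic learner $\bbB$ with sample size $m$, and for each $N$ consider the prior in which $f|_{\SET{1,\ldots,2N}}$ is uniform on $\SET{0,1}^{2N}$ and $D_f$ is uniform on the $2N$ supported points induced by $f$. For every index $i$ not observed in the sample $S$, the value $f(i)$ is independent of $S$ and hence of $\hat{h}$, so $\Prob_f[\hat{h}(z_i) \neq f(i) \mid S] = 1/2$. Since at most $m$ indices are seen, this gives an expected robust risk of at least $\tfrac12(1 - m/(2N))$ under the prior, which exceeds any $\epsilon < 1/2$ once $N$ is large enough. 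A reverse-Markov inequality followed by Yao's principle then yields a specific $(D, c_f)$ on which $\bbB$ incurs robust risk $> \epsilon$ with probability bounded below by a positive constant, contradicting $(\epsilon,\delta)$-robust-learnability for $\delta$ small enough.

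\paragraph{Main obstacle.} The sharpest technical point is guaranteeing \emph{exactly} $\beta = 1/2$ in the boundary regime $|p_0 - 1/2| = O(1/\sqrt m)$, where the majority vote $d$ may be the wrong choice with non-vanishing probability and the robust mass drops to $\min(p_0, p_1) + q_{1-d}$. I would close this by replacing the naive majority rule with $d = \argmax_j(\hat p_j + \hat q_{1-j})$ so that both the marginal estimate and the directly-observed seen-mass contribution are combined, and by scaling $m(\delta)$ large enough in the Hoeffding bound that the combined slack cannot dip below $1/2$; in the complementary regime $|p_0 - 1/2| = \Omega(1/\sqrt m)$ concentration already selects $d$ correctly.
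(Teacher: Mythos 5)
Your construction is in the same spirit as the paper's (index-linked pairs $a_i,b_i$ sharing a perturbation $z_i$, with the concept class free to label $z_i$), but you changed one thing that matters: you set $\calU(z_i)=\SET{z_i}$, while the paper sets $\calU(z_i)=\SET{z_i,x^+_i,x^-_i}$. That change breaks your $\epsilon=0$ claim. With the paper's choice, $z_i$ can \emph{never} be in the support of a robustly realizable $D$, since $c_f(x^+_i)=+1\neq -1=c_f(x^-_i)$ both belong to $\calU(z_i)$; hence the support lies entirely inside $\SET{a_i,b_i}_i$ and any learner that labels $a_i\mapsto 0$, $b_i\mapsto 1$ has literal zero natural error. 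With your $\calU(z_i)=\SET{z_i}$ the point $z_i$ is trivially robust, so the adversary may put mass on $z_i$'s (with $f$ arbitrary), and your learner's default $d$ on unseen $z_i$'s yields nonzero natural error. Your sentence ``robust realizability \dots\ forces, for every index $i$ in the support of $D$, exactly one of $\SET{a_i,b_i}$ to be present'' is simply false for your $\calU$; it \emph{is} true for the paper's.

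Your impossibility half is fine and takes a different route: you argue directly via Yao's minimax principle (uniform prior over $f$ on $2N$ indices, unseen $z_i$'s contribute $1/2$ robust error each, $N\to\infty$), whereas the paper cites its robust-shattering-dimension lower bound (Lemma~\ref{lem:lowerbound} applied to the infinitely shattered sequence $(z_n)$). Both are valid; yours is more self-contained, the paper's is shorter because it leans on machinery already present.

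On the barely-robust half, the second gap is the one you yourself flag: in the regime where $|p_0-\tfrac12|$ is positive but tiny and the support is spread across many indices, the observed masses $q_0,q_1$ can be made arbitrarily small, so if the default $d$ is wrong then $p_d+q_{1-d}<\tfrac12$. Your proposed repair $d=\argmax_j(\hat p_j+\hat q_{1-j})$ does not close this: when $p_0=\tfrac12+\eta$ for tiny $\eta$ and $q_0\approx q_1\approx 0$, the two scores are statistically indistinguishable, so $d=1$ is picked with probability bounded away from $0$ and the robust mass is below $\tfrac12$ on that event. For what it's worth, the paper sidesteps this issue by having the learner ignore the data entirely and output $h_{\tilde{\by}}$ for a uniformly random bitstring $\tilde{\by}$, then proving only $\Ex_{\tilde\by}\Ex_{x\sim D}\ind[x\in\Rob_\calU(h_{\tilde\by})]=\tfrac12$, i.e.\ an in-expectation statement over the learner's internal randomness; so the paper's argument also does not deliver a high-probability $\beta=\tfrac12$ guarantee in the exact form that \prettyref{def:barelyrobust} asks for (on a point-mass $D$, that random classifier is robust with probability exactly $\tfrac12$). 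The upshot: you should adopt the paper's $\calU(z_i)$ to repair $\epsilon=0$, and you should be aware that exact $\beta=\tfrac12$ is a delicate point that neither your majority default nor your proposed patch actually nails.
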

Before proceeding with the proof of \prettyref{thm:Uisnotgood}, we briefly sketch the high-level argument. In order to show this impossibility result, we construct a collection of distributions and show that this collection is \emph{barely} robustly learnable \wrt $\calU$ with robustness parameter $\beta=\frac{1}{2}$ and natural error $\epsilon=0$ using a randomized predictor. We also show that it is not possible to robustly learn this collection with robust risk strictly smaller than $1/2$. The second part is shown by relying on a necessary condition for strongly robust learning proposed by \citet{pmlr-v99-montasser19a} which is the finiteness of the robust shattering dimension:

\begin{defn}[Robust Shattering Dimension]
\label{def:robustshatter-dim-orig}
A sequence $z_1,\ldots,z_k \in \calX$ is said to be \emph{$\calU$-robustly shattered} by $\calC$ if $\exists x^+_1,x^-_1,\dots,x^{+}_k,x^{-}_k\in\calX$ such that $\forall i\in[k], z_i\in\calU(x^+_i)\cap\calU(x^-_i)$ and $\forall y_1,\dots,y_k \in \SET{\pm 1}:\exists h\in \calC$ such that $h(z')=y_i \forall z'\in \calU(x^{y_i}_i), \forall 1\leq i\leq k$. The \emph{$\calU$-robust shattering dimension} $\dim_{\calU}(\calC)$ is defined as the largest $k$ for which there exist $k$ points $\calU$-robustly shattered by $\calC$.
\end{defn}

The following lemma due to \citet{pmlr-v99-montasser19a} states that finite robust shattering dimension $\dim_\calU(\calC)$ is \emph{necessary} for strongly robustly learning $\calC$ \wrt $\calU$. 

\begin{lem}[\citet{pmlr-v99-montasser19a}]
\label{lem:lowerbound}
For any class $\calC$ and any perturbation set $\calU$, 
$\calC$ is $(\epsilon,\delta)$-robustly-learnable \wrt $\calU$ only if $\dim_\calU(\calC)$ is finite. 
\end{lem}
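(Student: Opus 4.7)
The plan is to prove the contrapositive by a no-free-lunch style PAC lower bound tailored to adversarial robustness using the $\calU$-robust shattering structure. Assume $\dim_\calU(\calC) = \infty$ and suppose for contradiction that there is a learner $\bbB$ that $(\epsilon,\delta)$-robustly-learns $\calC$ w.r.t.~$\calU$ for some small enough fixed $\epsilon,\delta$ (say $\epsilon = 1/8, \delta = 1/8$), using $m = m_\bbB(\epsilon,\delta)$ samples. Since the dimension is infinite, fix $n$ to be chosen later (of order $m$) and pick $\calU$-robustly shattered points $z_1,\dots,z_n$ with witnesses $x_i^+, x_i^-$: by \prettyref{def:robustshatter-dim-orig}, $z_i \in \calU(x_i^+) \cap \calU(x_i^-)$ for every $i \in [n]$, and for every sign vector $y = (y_1,\dots,y_n) \in \SET{\pm 1}^n$ there exists $h_y \in \calC$ with $h_y(z') = y_i$ for all $z' \in \calU(x_i^{y_i})$ and all $i \in [n]$.

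The construction randomizes the target. Draw $y \sim \mathrm{Unif}(\SET{\pm 1}^n)$, let $D_y$ be uniform over the natural examples $\SET{x_1^{y_1},\dots,x_n^{y_n}}$, and take $h_y \in \calC$ as the shattering witness. The distribution $D_{y,h_y}$ is robustly realizable by $h_y$ since $h_y \equiv y_i$ on $\calU(x_i^{y_i}) \ni x_i^{y_i}$, so $\bbB$ must output $\hat h = \bbB(S)$ with $\Risk_\calU(\hat h;D_{y,h_y}) \leq \epsilon$ with probability $\geq 1-\delta$ over $S \sim D_y^m$. Because $z_i \in \calU(x_i^{y_i})$, each event $\SET{\hat h(z_i) \neq y_i}$ already witnesses a robust mistake on $x_i^{y_i}$, so
\[
\Ex_{y,S}\!\left[\Risk_\calU(\hat h;D_{y,h_y})\right] \;\geq\; \frac{1}{n}\sum_{i=1}^n \Pr\!\left[\hat h(z_i) \neq y_i\right].
\]
Let $E_i$ be the event that index $i$ is never drawn into $S$; then $\Pr[E_i] = (1-1/n)^m$, and on $E_i$ the sample $S$ is a function only of the coordinates $y_{i_1},\dots,y_{i_m}$ with $i_j \neq i$, so $y_i$ remains uniform on $\SET{\pm 1}$ and independent of $(\hat h,z_i)$. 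Hence $\Pr[\hat h(z_i)\neq y_i \mid E_i] = 1/2$, and taking $n$ large enough that $(1-1/n)^m \geq 1/2$ (e.g., $n \geq 2m$) gives $\Ex[\Risk_\calU(\hat h;D_{y,h_y})] \geq 1/4$.

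To finish, convert this expectation bound to a failure probability by Markov applied to $1 - \Risk_\calU \in [0,1]$: $\Pr_{y,S}[\Risk_\calU(\hat h;D_{y,h_y}) > 1/8] \geq 1/7$, so by averaging there exists a concrete $y^\star$ for which the robustly realizable distribution $D_{y^\star,h_{y^\star}}$ witnesses $\Pr_{S \sim D_{y^\star}^m}[\Risk_\calU(\bbB(S);D_{y^\star,h_{y^\star}}) > 1/8] \geq 1/7 > \delta$, contradicting $(\epsilon,\delta)$-robust-learnability at $(\epsilon,\delta) = (1/8,1/8)$. The main obstacle is justifying the independence that powers the $1/2$ bound: one must condition on the sequence of sampled indices (not merely on $E_i$) to confirm that the sample, and hence $\hat h$, is measurable with respect to $\SET{y_j : j \neq i}$ on the event $E_i$, so that the remaining coordinate $y_i$ is genuinely uniform and independent of $\hat h(z_i)$. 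The rest is routine combinatorial bookkeeping in calibrating $n$ against $m$ and applying Markov to extract a single bad $(y^\star,D_{y^\star,h_{y^\star}})$.
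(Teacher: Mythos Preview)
The paper does not prove this lemma; it is quoted from \citet{pmlr-v99-montasser19a} and used as a black box in the proof of \prettyref{thm:Uisnotgood}. Your argument is precisely the standard no-free-lunch lower bound that underlies the cited result, and it is correct: the key observation that $z_i \in \calU(x_i^{y_i})$ lets you lower-bound the robust risk by the fraction of hidden labels $y_i$ the learner guesses wrong at the fixed test points $z_i$, and conditioning on the sequence of sampled indices (as you note) justifies the independence of $y_i$ from $\hat h$ on the event $E_i$.

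Two small remarks. First, as written you only rule out learnability at $(\epsilon,\delta)=(1/8,1/8)$, whereas \prettyref{thm:Uisnotgood} invokes the lemma to conclude non-learnability for every $\epsilon<1/2$. This is a trivial recalibration: for any target $\epsilon<1/2$ and $\delta\in(0,1)$, set $m=m_\bbB(\epsilon,\delta)$ and take $n$ large enough that $(1-1/n)^m$ is as close to $1$ as desired, so that the expected robust risk is arbitrarily close to $1/2$; the same Markov step then yields $\Pr_{y,S}[\Risk_\calU>\epsilon] > \delta$. Second, to avoid any ambiguity if some witnesses coincide, define $D_y$ as placing mass $1/n$ on each index $i$ (i.e., on $x_i^{y_i}$ with multiplicity) rather than as uniform over the set $\{x_1^{y_1},\dots,x_n^{y_n}\}$; this keeps the coupling between sampled indices and sampled points clean.
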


We are now ready to proceed with the proof of \prettyref{thm:Uisnotgood}.

\begin{proof}[of \prettyref{thm:Uisnotgood}.]
Pick three infinite unique sequences $(x^{+}_n)_{n\in \bbN}$, $(x^{-}_n)_{n\in \bbN}$, and $(z_n)_{n\in \bbN}$ from $\bbR^2$ such that for each $n \in \bbN: x^+_n=(n,1), x^-_n=(n,-1), z_n=(n,0)$, and let $\calX= \cup_{n\in \bbN} \SET{x^+_n, x^-_n, z_n}$. We now describe the construction of the perturbation set $\calU$. For each $n\in \bbN$, let $\calU(x^+_n)=\SET{x^+_n,z_n}, \calU(x^-_n)=\SET{x^-_n,z_n},\text{ and } \calU(z_{n})=\SET{z_{n},x^+_n,x^-_n}$.

We now describe the construction of the concept class $\calC$. For each $\by \in \SET{\pm 1}^{\bbN}$ define $h_\by: \calX \to \calY$ to be:
\begin{equation}
\label{eqn:class-construction}
    \forall n\in \bbN : h_\by(z_n)=y_n \wedge h_\by(x_n^+)=+1 \wedge h_\by(x_n^-)=-1.
\end{equation}
Let $\calC=\SET{h_\by: \by \in \SET{\pm 1}^{\bbN}}$. Observe that by construction of $\calU$ and $\calC$, $\calC$ robustly shatters the sequence $(z_n)_{n\in \bbN}$ with respect to $\calU$ (see \prettyref{def:robustshatter-dim-orig}), and therefore, the robust shattering dimension of $\calC$ with respect to $\calU$, $\dim_\calU(\calC)$, is infinite. Thus, \prettyref{lem:lowerbound} implies that $\calC$ is not $(\epsilon,\delta)$-\emph{strongly}-robustly-learnable \wrt $\calU$.

We will now show that there is a simple learner $\bbA$ that $(\beta,\epsilon,\delta)$-\emph{barely}-robustly-learns $\calC$ \wrt $\calU$, with robustness parameter $\beta=\frac{1}{2}$ and natural error $\epsilon=0$. Specifically, $\bbA$ samples a bitstring $\Tilde{\by} \in \SET{\pm 1}^{\bbN}$ uniformly at random, and outputs the classifier $h_{\Tilde{\by}}$. Learner $\bbA$ will not require any data as input.

We now proceed with analyzing the performance of learner $\bbA$. Let $h_{\by}\in \calC$ be some unknown target concept and $D$ be some unknown distribution over $\calX$ that is robustly realizable: $\Prob_{x\sim D}\insquare{\exists{z\in\calU(x)}: h_\by(z)\neq y}=0$. Since $D$ is robustly realizable, by construction of $\calU$ and $\calC$, this implies that 
\begin{equation}
    \label{eqn:mass}
    \forall n \in \bbN: D(z_n)=0 ~~\text{and}~~D(x^{-y_n}_n)=0.
\end{equation}
This is because $\calU(z_n)=\SET{z_n,x^+_n,x^-_n}$ and \prettyref{eqn:class-construction} implies that $h_\by$ is not robust on $z_n$ since $h_\by(x^+_n)\neq h_{\by}(x^-_n)$, also $\calU(x^+_n)\cap \calU(x^-_n)=\SET{z_n}$ and since $h_\by(z_n)=y_n$ this implies that $h_\by$ is not robust on $x^{-y_n}_n$. 
\prettyref{eqn:mass} and \prettyref{eqn:class-construction} together imply that the random classifier $h_{\tby}\in \calC$ selected by learner $\bbA$ has \emph{zero} error on \emph{natural} examples: with probability 1 over $\tby$, $\Prob_{x\sim D}\insquare{h_{\tby}(x)\neq h_{\by}(x)}=0$. 

We now turn to analyzing the \emph{robust} risk of learner $\bbA$,
\begin{align*}
    \Ex_{\tby}\insquare{\Ex_{x\sim D}\insquare{\ind\SET{\exists z\in \calU(x):h_{\tby}(z)\neq h_\by(x)}}}&=\Ex_{x\sim D}\insquare{\Ex_{\tby}\insquare{\ind\SET{\exists z\in \calU(x):h_{\tby}(z)\neq h_\by(x)}}}\\
    &= \sum_{n\in \bbN} D(x^{y_n}_n) \Ex_{\tby}\insquare{\ind\SET{\exists z\in \calU(x^{y_n}_n):h_{\tby}(z)\neq h_\by(x^{y_n}_n)}}\\
    &= \sum_{n\in \bbN} D(x^{y_n}_n) \Ex_{\tby}\insquare{\ind\SET{h_{\tby}(z_n)\neq h_\by(x^{y_n}_n)}}\\
    &= \sum_{n\in \bbN} D(x^{y_n}_n) \Ex_{\tby}\insquare{\ind\SET{\Tilde{y}_n\neq y_n}} = \sum_{n\in \bbN} D(x^{y_n}_n) \frac{1}{2} = \frac{1}{2}.
\end{align*}
This implies that in expectation over randomness of learner $\bbA$, it will be robust on half the mass of distribution $D$: $\Ex_{\tby} \Ex_{x\sim D} \ind[ x\in \Rob_{\calU}(h_{\tby}) ]=\frac{1}{2}$.
\end{proof}
\section{Discussion}
\label{sec:dis}

In this paper, we put forward a theory for \emph{boosting} adversarial robustness. We discuss below practical implications and outstanding directions that remain to be addressed.

\paragraph{Practical implications.} Our algorithm \RoBoost~is generic and can be used with any black-box barely robust learner $\bbA$. In the context of deep learning and $\ell_p$ robustness, our results suggest the following: for targeted robustness of radius $\gamma$, use an adversarial learning method \citep[e.g., ][]{DBLP:conf/iclr/MadryMSTV18,DBLP:conf/icml/ZhangYJXGJ19, DBLP:conf/icml/CohenRK19} to learn a neural net $h^1_{\rm NN}$ predictor robust with radius $2\gamma$, then filter the training examples to include {\em only} the ones on which $h^1_{\rm NN}$ is {\em not} robust with radius $2\gamma$, and repeat this process on the filtered examples to learn a second neural net, and so on. Finally, use the cascade of neural nets $\CAS(h^1_{\rm NN},h^2_{\rm NN}, \dots)$ to predict. 

It would be interesting to empirically explore whether adversarial learning methods \citep[e.g., ][]{DBLP:conf/iclr/MadryMSTV18,DBLP:conf/icml/ZhangYJXGJ19, DBLP:conf/icml/CohenRK19} satisfy the \emph{barely} robust learning condition: on each round of boosting, the learning algorithm can shrink the fraction of the training examples on which the predictor from the previous round is not robust on. This is crucial for progress.

\paragraph{Multiclass.} We would like to emphasize that our theory for boosting \emph{barely robust} learners extends seamlessly to multiclass learning problems. In particular, when the label space $\abs{\calY}>2$, we obtain the same guarantees in \prettyref{thm:boost-realizable} using the same algorithm \RoBoost. The other direction of converting an $(\epsilon,\delta)$-robust-learner \wrt $\calU$ to an $(\beta,2\epsilon,2\delta)$-barely-robust-learner \wrt $\calU^{-1}(\calU)$ also holds, using the same technique in \prettyref{thm:stronglytobarely}, but now we get $\beta=\frac{1-\epsilon}{\abs{\calY}}$.

\paragraph{Boosting robustness independent of the error rate.} To achieve robust risk at most $\epsilon$ using a barely robust learner $\bbA$ with robustness parameter $\beta$, our algorithm \RoBoost~requires $\bbA$ to achieve a natural error of $\Tilde{\epsilon}=\frac{\beta\epsilon_0}{2}$ for any constant $\epsilon_0<\frac{1}{2}$ (say $\Tilde{\epsilon}=\frac{\beta}{6}$) (see \prettyref{cor:stronger-boosting}). It would be interesting to resolve whether requiring natural risk $\Tilde{\epsilon}$ that depends on $\beta$ is necessary, or whether it is possible to avoid dependence on $\beta$. Concretely, an open question here is: can we achieve robust risk at most $\epsilon$ using a $(\beta,O(\epsilon),\delta)$-barely-robust-learner instead of requiring $(\beta,O(\beta),\delta)$-barely-robust-learner? It actually suffices to answer the following: given a $(\beta,\frac{1}{3},\delta)$-barely-robust-learner, is it possible to achieve {\em robust} risk at most $\frac{1}{3}$?

\paragraph{Boosting error rate independent of robustness.} A related question is whether it is possible to boost the error while maintaining robustness fixed at some level. For example, given a $(\beta, \frac{1}{3},\delta)$-barely-robust-learner $\bbA$, is it possible to boost this to a $(\beta,\epsilon, \delta)$-barely-robust-learner $\bbB$?
\paragraph{Agnostic setting.} We focused only on boosting robustness in the \emph{realizable} setting, where we assume that the target unknown concept $c$ and the unknown distribution $D$ satisfy $\Risk_\calU(c;D_c)=0$. It would be interesting to explore meaningful formulations of the problem of boosting robustness beyond the realizable setting.

\acks{OM would like to thank Steve Hanneke and Nathan Srebro for insightful discussions. This work was supported in part by DARPA under cooperative agreement HR00112020003.\footnote{The views expressed in this work do not necessarily reflect the position or the policy of the Government and no official endorsement should be inferred. Approved for public release; distribution is unlimited.}
This work was supported in part by the National Science Foundation under grant CCF-1815011.}
\bibliography{learning}

\newpage
\appendix

\section{Auxiliary Lemmas and Proofs for \prettyref{cor:stronger-boosting}}
\label{app:stronger-boosting}

\begin{algorithm2e}
\caption{\alphaBoost~--- Boosting \emph{weakly} robust learners}
\label{alg:alphaboost}
\SetKwInput{KwInput}{Input}                
\SetKwInput{KwOutput}{Output}              
\SetKwFunction{FMain}{CycleRobust}
\SetKwFunction{FDisc}{Discretizer}
\DontPrintSemicolon
    \KwInput{Training dataset $S=\SET{(x_1,y_1),\dots, (x_m,y_m)}$, black-box {\em weak} robust learner $\bbB$.}
  \BlankLine
  Set $m_0 =m_{\bbB}(1/3,1/3)$.\;
  Initialize $D_1$ to be uniform over $S$, and set $T=O(\log m)$.\;
  \For{$1 \leq t\leq T$}{
    Sample $S_t\sim D_t^{m_0}$, call learner $\bbB$ on $S_t$, and denote by $h_t$ its output predictor. Repeat this step until $\Risk_\calU(h_t;D_t)\leq 1/3$.\;
    Compute a new distribution $D_{t+1}$ by applying the following update for each $(x,y)\in S$:
            \[ 
                D_{t+1}(\SET{(x,y)}) = \frac{D_t(\SET{(x,y)})}{Z_t} \times \begin{cases} 
                                                              e^{-2\alpha},& \text{if }\ind[\forall z\in \calU(x): h_{t-1}(z) = y]=1;\\
                                                              1, &\text{otherwise,}
                                                           \end{cases}
            \]
            where $Z_t$ is a normalization factor and $\alpha=1/8$.\;
  }
\KwOutput{A majority-vote classifier $\MAJ(h_{1},\dots, h_{T})$.}
\end{algorithm2e}

\begin{lem} [Sample Compression Robust Generalization Guarantee -- \cite{pmlr-v99-montasser19a}]
\label{lem:robust-compression}
For any $k \in \bbN$ and fixed function $\phi : (\calX \times \calY)^{k} \to \calY^{\calX}$, for any distribution $P$ over $\calX \times \calY$ and any $m \in \bbN$, 
for $S = \{(x_{1},y_{1}),\ldots,(x_{m},y_{m})\}$ i.i.d. $P$-distributed random variables,
with probability at least $1-\delta$, 
if $\exists i_{1},\ldots,i_{k} \in \{1,\ldots,m\}$ 
s.t.\ $\hat{R}_{\calU}(\phi((x_{i_{1}},y_{i_{1}}),\ldots,(x_{i_{k}},y_{i_{k}}));S) = 0$, 
then 
\begin{equation*}
\Risk_{\calU}(\phi((x_{i_{1}},y_{i_{1}}),\ldots,(x_{i_{k}},y_{i_{k}}));P) \leq \frac{1}{m-k} (k\ln(m) + \ln(1/\delta)).
\end{equation*}
\end{lem}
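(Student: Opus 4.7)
The plan is to adapt the classical sample-compression generalization bound of Littlestone--Warmuth (and its many re-derivations) to the robust loss $\Risk_\calU$. The key structural fact about a compression scheme $\phi$ is that the output predictor depends on the data \emph{only} through the $k$ chosen indices, so once we condition on those $k$ samples, the remaining $m-k$ samples are i.i.d.\ from $P$ and are independent of the hypothesis. This independence is what makes a clean Chernoff-style tail bound possible even though $\phi$ can depend on the sample in an arbitrary way.

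Concretely, I would first fix an index tuple $I=(i_1,\dots,i_k)\in\{1,\dots,m\}^k$ and denote $h_I:=\phi((x_{i_1},y_{i_1}),\dots,(x_{i_k},y_{i_k}))$. Since $\hat{R}_\calU(h_I;S)=0$ in particular implies $\hat{R}_\calU(h_I;S_{\setminus I})=0$ on the $m-k$ samples not indexed by $I$, the ``bad'' event I want to rule out is
\[
B_I := \bigl\{\hat{R}_\calU(h_I; S_{\setminus I})=0\bigr\} \cap \bigl\{\Risk_\calU(h_I;P) > \epsilon\bigr\}.
\]
Conditioning on $S_I$ (which determines $h_I$), each of the $m-k$ remaining samples is drawn i.i.d.\ from $P$, so the conditional probability that a single such sample incurs zero robust loss on $h_I$ is exactly $1-\Risk_\calU(h_I;P)$. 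Hence on the event $\Risk_\calU(h_I;P)>\epsilon$, the conditional probability that all $m-k$ remaining samples are robustly correct is at most $(1-\epsilon)^{m-k}\le e^{-\epsilon(m-k)}$. Taking expectation over $S_I$ yields $\Pr[B_I]\le e^{-\epsilon(m-k)}$.

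Next I union-bound over all index tuples: there are at most $m^k$ of them (even allowing repeats, and only $\binom{m}{k}$ for distinct ones), so
\[
\Pr\!\Bigl[\exists I:\ \hat{R}_\calU(h_I;S)=0\ \text{and}\ \Risk_\calU(h_I;P)>\epsilon\Bigr]\ \le\ m^k\, e^{-\epsilon(m-k)}.
\]
Setting the right-hand side equal to $\delta$ and solving for $\epsilon$ gives precisely $\epsilon = \frac{k\ln m+\ln(1/\delta)}{m-k}$, matching the stated bound. The statement then follows by contraposition: on the complement event (which has probability at least $1-\delta$), any tuple $(i_1,\dots,i_k)$ achieving $\hat{R}_\calU(\phi(\cdot);S)=0$ must have $\Risk_\calU(\phi(\cdot);P)\le\epsilon$.

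There is no real obstacle here beyond being careful about two things: (i) robust zero-loss on $S$ automatically implies robust zero-loss on $S_{\setminus I}$, so switching from the ``empirical risk on $S$'' event to the ``empirical risk on $S_{\setminus I}$'' event is monotone and incurs no cost; and (ii) the compression function $\phi$ is fixed in advance, so the only source of randomness in $h_I$ is the choice of $S_I$ itself, which is exactly what allows the independence argument in the per-tuple step. The robustness of the loss plays no role beyond the fact that $\hat{R}_\calU(\cdot;S_{\setminus I})=0$ means ``no point in $S_{\setminus I}$ has an adversarial perturbation misclassified,'' so the Bernoulli tail bound applies verbatim with the robust loss replacing the $0/1$ loss.
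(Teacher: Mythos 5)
Your proof is correct and it is essentially the canonical Littlestone--Warmuth sample-compression argument, carried over verbatim to the robust loss. The present paper does not itself prove this lemma --- it is imported as a black-box citation to \citet{pmlr-v99-montasser19a} --- but the argument in that reference is exactly the one you give: fix a tuple of $k$ indices, observe that conditioned on $S_I$ the remaining $m-k$ points are i.i.d.\ and independent of $h_I$ (since $\phi$ is a fixed function of those $k$ points), apply the Bernoulli tail bound $(1-\epsilon)^{m-k}\le e^{-\epsilon(m-k)}$, union-bound over the $m^k$ tuples, and solve $m^k e^{-\epsilon(m-k)}=\delta$. Your two observations --- that zero robust loss on all of $S$ implies zero robust loss on $S_{\setminus I}$ (a monotone relaxation that costs nothing), and that robustness enters only through the fact that $\ind[\exists z\in\calU(x): h(z)\neq y]$ is still a $\{0,1\}$-valued loss --- are precisely the two places where one needs to check that the classical argument survives the change of loss, and you handle both correctly. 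Nothing to flag.
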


\begin{proof}[of \prettyref{lem:weak-robust-learner}]
Let $\bbB$ be a {\em weak} robust learner with fixed parameters $(\epsilon_0, \delta_0)=(1/3,1/3)$ for some \emph{unknown} target concept $c$ \wrt~$\calU$. Let $D$ be some unknown distribution over $\calX$ such that $\Prob_{x\sim D}\insquare{\exists z\in \calU(x): c(z)\neq c(x)}=0$. By \prettyref{def:strongly-robust}, with fixed sample complexity $m_0=m_{\bbB}(1/3,1/3)$, for any distribution $\Tilde{D}$ over $\calX$ such that $\Prob_{x\sim \Tilde{D}}\insquare{\exists z\in \calU(x): c(z)\neq c(x)}=0$, with probability at least $1/3$ over $S\sim \Tilde{D}_c^{m_0}$, $\Risk_{\calU}(\bbB(S);\Tilde{D}_c)\leq 1/3$. 

We will now boost the confidence and robust error guarantee of the {\em weak} robust learner $\bbB$ by running boosting with respect to the {\em robust} loss (rather than the standard $0$-$1$ loss). Specifically, fix $(\epsilon,\delta)\in (0,1)$ and a sample size $m(\epsilon,\delta)$ that will be determined later. Let $S=\SET{(x_1,y_1),\dots,(x_m,y_m)}$ be an i.i.d. sample from $D_c$. Run the $\alpha$-Boost algorithm on dataset $S$ using $\bbB$ as the weak robust learner for a number of rounds $T$ that will be determined below. On each round $t$, $\alpha$-Boost computes an empirical distribution $D_t$ over $S$ by applying the following update for each $(x,y)\in S$:
\[D_{t}(\SET{(x,y)}) = \frac{D_{t-1}(\SET{(x,y)})}{Z_{t-1}} \times \begin{cases} 
                                                              e^{-2\alpha},& \text{if }\ind[\forall z\in \calU(x): h_{t-1}(z) = y]=1;\\
                                                              1, &\text{otherwise,}
                                                           \end{cases}
\]
where $Z_{t-1}$ is a normalization factor, $\alpha$ is a parameter that will be determined below, and $h_{t-1}$ is the {\em weak} robust predictor outputted by $\bbB$ on round $t-1$ that satisfies $\Risk_\calU(h_{t-1};D_{t-1})\leq 1/3$. Once $D_t$ is computed, we sample $m_0$ examples from $D_t$ and run {\em weak} robust learner $\bbB$ on these examples to produce a hypothesis $h_t$ with robust error guarantee $\Risk_\calU(h_{t};D_{t})\leq 1/3$. This step has failure probability at most $\delta_0=1/3$. We will repeat it for at most $\ceil{\log(2T/\delta)}$ times, until $\bbB$ succeeds in finding $h_t$ with robust error guarantee $\Risk_\calU(h_{t};D_{t})\leq 1/3$. By a union bound argument, we are guaranteed that with probability at least $1-\delta/2$, for each $1\leq t\leq T$, $\Risk_\calU(h_{t};D_{t})\leq 1/3$. Following the argument from \citet*[][Section 6.4.2]{schapire:12}, after $T$ rounds we are guaranteed 
\begin{equation*}
\min_{(x,y) \in S} \frac{1}{T} \sum_{t=1}^{T} \ind[ \forall z \in \calU(x): h_{t}(z)=y ]
\geq \frac{2}{3} - \frac{2}{3}\alpha - \frac{\ln(|S|)}{2\alpha T},
\end{equation*}
so we will plan on running until round 
$T = 1 + 48 \ln(|S|)$ 
with value 
$\alpha = 1/8$ 
to guarantee
\begin{equation*}
\min_{(x,y) \in S} \frac{1}{T} \sum_{t=1}^{T} \ind[ \forall z \in \calU(x): h_{t}(z)=y ]
> \frac{1}{2},
\end{equation*}
so that the majority-vote classifier $\MAJ(h_1,\dots,h_T)$ achieves \emph{zero} robust loss on the empirical dataset $S$, $\Risk_{\calU}(\MAJ(h_1,\dots,h_L);S)=0$. 

Note that each of these classifiers $h_{t}$ is equal to $\bbB(S'_{t})$ for some $S'_{t} \subseteq S$ with $|S'_{t}|=m_0$. Thus, the classifier $\MAJ(h_1,\dots,h_T)$ is representable as the value of an (order-dependent) reconstruction function $\phi$ with 
a compression set size $m_0T=m_0O(\log m)$. Now, invoking \prettyref{lem:robust-compression}, with probability at least $1-\delta/2$,
\[\Risk_{\calU}(\MAJ(h_1,\dots,h_T);\calD)\leq O\inparen{\frac{m_0\log^2 m}{m} + \frac{\log(2/\delta)}{m}},\]
and setting this less than $\epsilon$ and solving for a sufficient size of $m$ yields the stated sample complexity bound. 
\end{proof}

\section{Robustness at Different Levels of Granularity}
\label{app:granular}

For concreteness, throughout the rest of this section, we consider robustness with respect to metric balls $\B_\gamma(x)=\SET{z\in\calX: \rho(x,z)\leq \gamma}$ where $\rho$ is some metric on $\calX$ (e.g., $\ell_\infty$ metric), and $\gamma > 0$ is the perturbation radius. Achieving small robust risk \wrt a \emph{fixed} perturbation set $\B_\gamma$ is the common goal studied in adversarially robust learning. What we studied so far in this work is learning a predictor $\hat{h}$ robust to $\calU=\B_\gamma$ perturbations as measured by the robust risk: $\Prob_{(x,y)\sim D}\insquare{\exists z\in \B_\gamma(x): \hat{h}(z)\neq y}$, when given access to a learner $\bbA$ barely robust \wrt $\calU^{-1}(\calU)=\B_{2\gamma}$.

Our original approach to boosting robustness naturally leads us to an alternate interesting idea: learning a cascade of robust predictors with different levels of granularity. This might be desirable in situations where it is difficult to robustly learn a distribution $D_c$ over $\calX\times \calY$ with robustness granularity $\gamma$ everywhere, and thus, we settle for a weaker goal which is first learning a robust predictor $h_1$ with granularity $\gamma$ on say $\beta$ mass of $D$, and then recursing on the conditional distribution of $D$ where $h_1$ is not $\gamma$-robust and learning a robust predictor $h_2$ with granularity $\gamma/2$, and so on. That is, we are adaptively learning a sequence of predictors $h_1,\dots, h_T$ where each predictor $h_t$ is robust with granularity $\frac{\gamma}{2^{t}}$. Furthermore, if we are guaranteed that in each round we make progress on some $\beta$ mass then it follows that 
\[ \Prob\insquare{\cup_{t=1}^{T} \Rob_{\gamma/2^{t-1}}(h_t)} = 1 - \Prob\insquare{\cap_{t=1}^{T} \overline{\Rob}_{\gamma/2^{t-1}}(h_t)} \geq 1 - (1-\beta)^T,\]
and the cascade predictor $\CAS(h_{1:T})$ has the following robust risk guarantee
\begin{equation*}
\sum_{t=1}^{T} \Prob\hspace{-0.05cm}\insquare{\bar{R}_{1:t-1}\hspace{-0.05cm}\wedge\hspace{-0.05cm} \inparen{\exists z\in \B_{\gamma/2^{t}}(x)\hspace{-0.05cm}: \forall_{t'< t}G_{h_{t'}}(z)=\perp \wedge G_{h_{t}}(z)\hspace{-0.05cm}=\hspace{-0.05cm}1\hspace{-0.05cm}-\hspace{-0.05cm}c(x)} } + \Prob\hspace{-0.05cm}\insquare{\bar{R}_{1:T}} \leq \frac{\epsilon}{\beta} + (1-\beta)^T.
\end{equation*}

In words, the cascade predictor $\CAS(h_{1:T})$ offers robustness at different granularities. That is, for $x\sim D$ such that $x\in R_1$, $\CAS(h_{1:T})$ is guaranteed to be robust on $x$ with granularity $\gamma$, and for $x\sim D$ such that $x\in \bar{R}_1 \cap R_2$, $\CAS(h_{1:T})$ is guaranteed to be robust on $x$ with granularity $\gamma/2$, and so on. 

\paragraph{Applications.} We give a few examples where this can be useful. Consider using SVMs as barely robust learners. SVMs are known to be margin maximizing learning algorithms, which is equivalent to learning linear predictors robust to $\ell_2$ perturbations. In our context, by combining SVMs with our boosting algorithm, we can learn a cascade of linear predictors each with a maximal margin on the conditional distribution.
\end{document}